\newcommand{\cmark}{\ding{51}}%
\newcommand{\xmark}{\ding{55}}%
\newtheorem{theorem}{Theorem}
\newtheorem{lemma}{Lemma}
\newtheorem{proposition}{Proposition}
\icmltitlerunning{Momentum Residual Neural Networks}
\begin{document}

\twocolumn[
\icmltitle{Momentum Residual Neural Networks}

\icmlsetsymbol{equal}{*}

\begin{icmlauthorlist}
\icmlauthor{Michael E. Sander}{ens,cnrs}
\icmlauthor{Pierre Ablin}{ens,cnrs}
\icmlauthor{Mathieu Blondel}{google}
\icmlauthor{Gabriel Peyr\'e}{ens,cnrs}
\end{icmlauthorlist}

\icmlaffiliation{ens}{Ecole Normale Sup\'erieure, DMA, Paris, France}
\icmlaffiliation{cnrs}{CNRS, France}
\icmlaffiliation{google}{Google Research, Brain team}

\icmlcorrespondingauthor{Michael Sander}{michael.sander@ens.fr}
\icmlcorrespondingauthor{Pierre Ablin}{pierre.ablin@ens.fr}
\icmlcorrespondingauthor{Mathieu Blondel}{mblondel@google.com}
\icmlcorrespondingauthor{Gabriel Peyr\'e}{gabriel.peyre@ens.fr}

\icmlkeywords{Machine Learning, ICML}

\vskip 0.3in
]

\printAffiliationsAndNotice{} 

\begin{abstract}
The training of deep residual neural networks (ResNets) with backpropagation has a memory cost that increases linearly with respect to the depth of the network. 
A way to circumvent this issue is to use reversible architectures.
In this paper, we propose to change the forward rule of a ResNet by adding a momentum term. The resulting networks, momentum residual neural networks (Momentum ResNets),
are invertible.
Unlike previous invertible architectures, they can be used as a drop-in replacement for any existing ResNet block.
We show that Momentum ResNets can be interpreted in the infinitesimal step size regime as second-order ordinary differential equations (ODEs) and exactly characterize how adding momentum progressively increases the representation capabilities of Momentum ResNets: they can learn any linear mapping up to a multiplicative factor, while ResNets cannot.
In a learning to optimize setting, 
where convergence to a fixed point is required, we show theoretically and empirically that our method succeeds while existing invertible architectures fail.  
We show on CIFAR and ImageNet that Momentum ResNets have the same accuracy as ResNets, while having a much smaller memory footprint, and show that pre-trained Momentum ResNets are promising for fine-tuning models.
\end{abstract}

\vspace{-1em}
\section{Introduction}
\label{submission}

\paragraph{Problem setup.}

As a particular instance of deep learning \citep{cite-key,Goodfellow-et-al-2016}, residual neural networks \citep[ResNets]{he2015deep} have achieved great empirical successes due to extremely deep representations and their extensions keep on outperforming state of the art on real data sets \citep{alex2019big,touvron2020fixing}.
Most of deep learning tasks involve graphics processing units (GPUs), where memory is a practical bottleneck in several situations \citep{wang2018superneurons,peng2017large,zhu2017unpaired}.
Indeed, backpropagation, used for optimizing deep architectures, requires to store values (activations) at each layer during the evaluation of the network (forward pass). Thus, the depth of deep architectures is constrained by the amount of available memory. The main goal of this paper is to explore the properties of a new model, Momentum ResNets, that circumvent these memory issues by being invertible: the activations at layer $n$ is recovered exactly from activations at layer $n+1$.
This network relies on a modification of the ResNet's forward rule which makes it exactly invertible in practice. 
Instead of considering the feedforward relation for a ResNet (residual building block)
\begin{equation}\label{eq:ResNet}
    x_{n+1} = x_n + f(x_n,\theta_n),
\end{equation}
we define its momentum counterpart, which iterates %
\begin{equation}\label{eq:Momentum ResNet}
\left\{
\begin{array}{r@{\hspace{1mm}}l}
v_{n+1} & = \gamma v_n + (1-\gamma) f(x_n,\theta_n)\\ 
x_{n+1} & = x_n + v_{n+1}, 
\end{array}
\right.
\end{equation}
where $f$ is a parameterized function, $v$ is a velocity term and $\gamma \in [0, 1]$ is a momentum term. This radically changes the dynamics of the network, as shown in the following figure.
\begin{figure}[H]
\centering
\includegraphics[width=0.7\columnwidth]{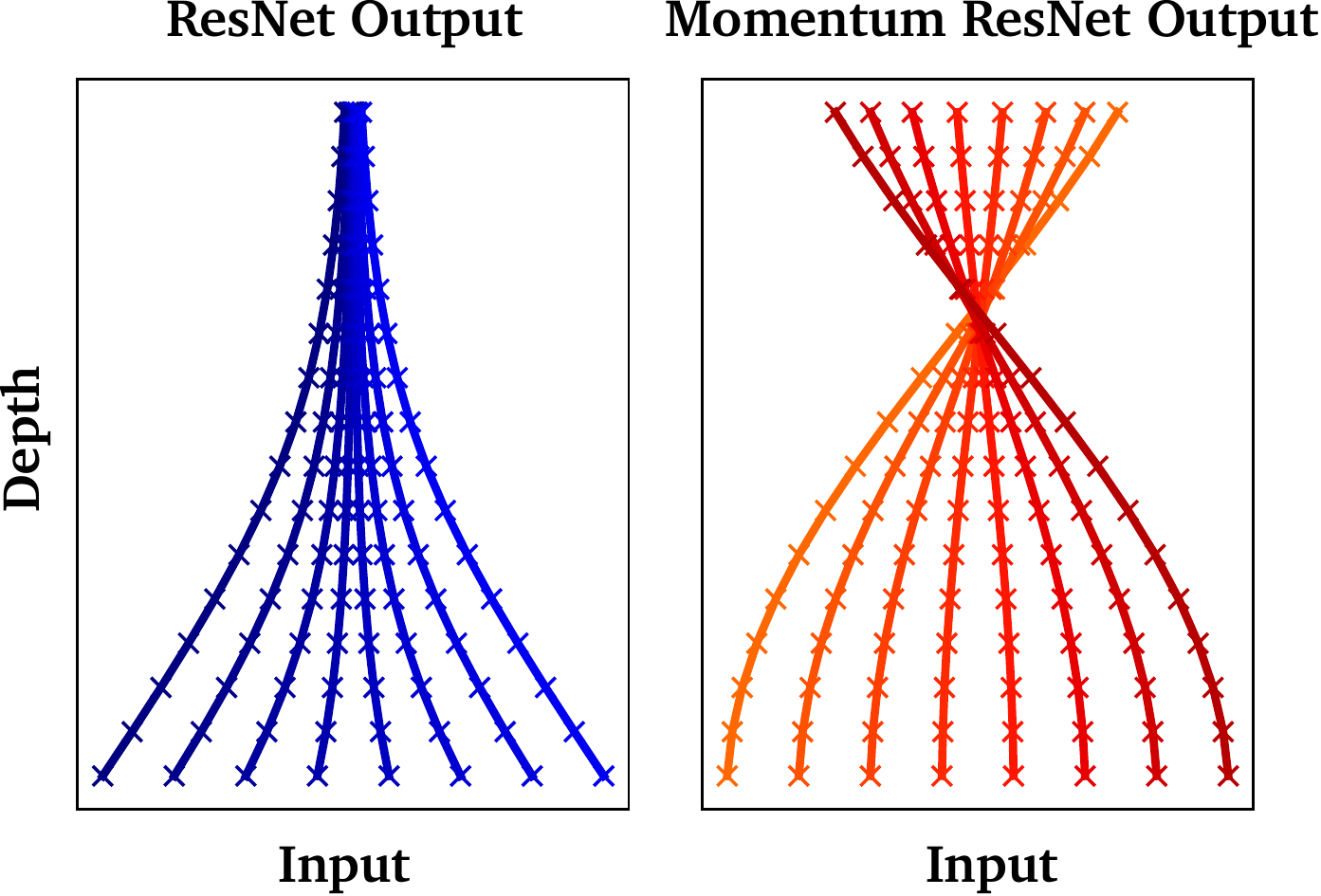} 
\caption{\textbf{Comparison of the dynamics} of a ResNet (left) and a Momentum ResNet with $\gamma = 0.9$ (right) with \textbf{tied weights} between layers, $\th_n=\th$ for all $n$. The evolution of the activations at each layer is shown (depth 15). Models try to learn the mapping $x \mapsto -x^{3}$ in $\RR$. The ResNet fails (the iterations approximate the solution of a first-order ODE, for which trajectories don’t cross, cf.\ Picard-Lindelof theorem)
while the Momentum ResNet leverages the changes in velocity to model more complex dynamics.}\label{fig:Res_Mom}
\vspace{-1em}
\end{figure}
In contrast with existing reversible models, Momentum ResNets can be integrated seamlessly in any deep architecture which uses residual blocks as building blocks (cf.\ in Section~\ref{section:mom_net}).
\vspace{-1em}
\paragraph{Contributions.}

We  introduce momentum residual neural networks (Momentum ResNets), a new deep model that relies on a simple modification of the ResNet forward rule and which, without any constraint on its architecture, is perfectly invertible. We show that the memory requirement of Momentum ResNets is arbitrarily reduced by changing the momentum term $\gamma$ (Section~\ref{section:memory_cost}), and show that they can be used as a drop-in replacement for traditional ResNets.

On the theoretical side,  we show that Momentum ResNets are easily used in the learning to optimize setting, where other reversible models fail to converge (Section~\ref{sec:role_momentum}). We also investigate the approximation capabilities of Momentum ResNets, seen in the continuous limit as second-order ODEs (Section~\ref{section:representation_capabilities}). We first show in Proposition~\ref{prop:bigger_set} that Momentum ResNets can represent a strictly larger class of functions than first-order neural ODEs. Then, we give more detailed insights by studying the linear case, where we formally prove in Theorem~\ref{th:representability} that Momentum ResNets with linear residual functions have universal approximation capabilities, and precisely quantify how the set of representable mappings for such models grows as the momentum term $\gamma$ increases. This theoretical result is a first step towards a theoretical analysis of representation capabilities of Momentum ResNets.
 
 Our last contribution is the experimental validation of Momentum ResNets on various learning tasks. We first show that Momentum ResNets separate point clouds that ResNets fail to separate (Section~\ref{sec:point_clouds}). We also show on image datasets (CIFAR-10, CIFAR-100, ImageNet) that Momentum ResNets have similar accuracy as ResNets, with a smaller memory cost (Section~\ref{sec:experiments_images}). We also show that parameters of a pre-trained model are easily transferred to a Momentum ResNet which achieves comparable accuracy in only few epochs of training. We argue that this way to obtain pre-trained Momentum ResNets is of major importance for fine-tuning a network on new data for which memory storage is a bottleneck.  We provide a Pytorch package with a method that takes a torchvision ResNet model and returns its Momentum counterpart that achieves similar accuracy with very little refit.  We also experimentally validate our theoretical findings in the learning to optimize setting, by confirming that Momentum ResNets perform better than RevNets \citep{gomez2017reversible}. 
 Our code is available at \url{https://github.com/michaelsdr/momentumnet}.

\section{Background and previous works.}

\paragraph{Backpropagation.}
\textit{Backpropagation} is the method of choice to compute the gradient of a scalar-valued function.
It operates using the chain rule with a backward traversal of the computational graph \citep{Computational_Graphs}. It is also known as reverse-mode automatic differentiation \citep{baydin2015automatic, 1986Natur.323..533R,verma2000introduction, doi:10.1137/1.9780898717761}. The computational cost is similar to the one of evaluating the function itself. The only way to back-propagate gradients through a neural architecture without further assumptions is to store all the intermediate activations during the forward pass. This is the method used in common deep learning libraries such as Pytorch~\citep{paszke2017automatic}, Tensorflow~\citep{tensorflow2015-whitepaper} and JAX~\citep{jacobsen2018irevnet}. A common way to reduce this memory storage is to use checkpointing: activations are only stored at some steps and the others are recomputed between these check-points as they become needed in the backward pass~(e.g., \citet{martens2012training}).
\vspace{-1em}
\paragraph{Reversible architectures.}
However, models that allow backpropagation without storing any activations have recently been developed. They are based on two kinds of approaches.
The first is \textit{discrete} and relies on finding ways to easily invert the rule linking activation $n$ to activation $n+1$ \citep{gomez2017reversible,chang2017reversible,Haber_2017,jacobsen2018irevnet, behrmann2019invertible}.
In this way, it is possible to recompute the activations \emph{on the fly} during the backward pass: activations do not have to be stored.
However, these methods either rely on restricted architectures where there is no straightforward way to transfer a well performing non-reversible model into a reversible one, or do not offer a fast inversion scheme when recomputing activations backward. In contrast, our proposal can be applied to any existing ResNet and is easily inverted.
The second kind of approach is \textit{continuous} and
relies on ordinary differential equations (ODEs), where ResNets are interpreted as continuous dynamical systems \citep{E_2017,chen2018neural,dupont2019augmented,sun2018stochastic,E_2018,lu2017finite,ruthotto2018deep}. This allows one to import theoretical and numerical advances from ODEs to deep learning.
These models are often called neural ODEs~\citep{chen2018neural} and can be trained by using an adjoint sensitivity method \citep{Pontryagin:234445}, solving ODEs backward in time. This strategy avoids performing reverse-mode automatic differentiation 
through the operations of the ODE solver and leads to a $O (1)$ memory footprint.
However, defining the neural ODE counterpart of an existing residual architecture is not straightforward: optimizing ODE blocks is an infinite dimensional problem requiring a non-trivial time discretization, and the performances of neural ODEs depend on the numerical integrator for the ODE \citep{gusak2020towards}.
In  addition, ODEs cannot always be numerically reversed, because of stability issues: numerical errors can occur and accumulate when a system is run backwards \citep{gholami2019anode, dupont2019augmented}. Thus, in practice, neural ODEs are seldom used in standard deep learning settings.  Nevertheless, recent works \citep{zhang2019anodev2, queiruga2020continuous} incorporate ODE blocks in neural architectures to achieve comparable accuracies to ResNets on CIFAR.

\vspace{-0.5em}
\if 0
\begin{table*}[h!]
    \centering
    \begin{tabular}{ |p{4cm}|p{2cm}|p{2cm}|p{2cm}|p{2cm}|p{2cm}|}
     \hline
 \textbf{Model} & Neural ODE &  i-ResNet & i-RevNet & RevNet & Momentum ResNet\\
 \hline
   Closed-form inversion & \cmark &  \xmark & \cmark & \cmark & \cmark \\
   \hline
 Same parameters  & \xmark &  \cmark & \xmark & \xmark & \cmark \\
 \hline
 Unconstrained training & \cmark &  \xmark & \cmark & \cmark & \cmark \\ 
 \hline
\end{tabular}
\caption{\label{tab:comparison_invertible_architectures} {\bf Comparison of reversible residual architectures} }
\end{table*}
 \fi 

\vspace{-0.5em}
 \paragraph{Representation capabilities.}

Studying the representation capabilities of such models is also important, as it gives insights regarding their performance on real world data. %
  It is well-known that a single residual block has universal approximation capabilities \citep{Cybenkot2006ApproximationBS}, meaning that on a compact set any continuous function can be uniformly approximated with a one-layer feedforward fully-connected neural network. However, neural ODEs have limited representation capabilities. 
 \citet{dupont2019augmented} propose to lift points in higher dimensions by concatenating vector fields of data with zeros in an extra-dimensional space, and show that the resulting augmented neural ODEs (ANODEs) achieve lower loss and better generalization on image classification and toy experiments.  \citet{li2019deep} show that, if the output of the ODE-Net is composed with elements of a terminal family, then universal approximation capabilities are obtained for the convergence in $L^p$ norm for $p<+ \infty$, which is insufficient \citep{teshima2020universal}. In this work, we consider the representation capabilities in $L^{\infty}$ norm of the ODEs derived from the forward iterations of a ResNet.  Furthermore, \citet{zhang2019approximation} proved that doubling the dimension of the ODE leads to universal approximators, although this result has no application in deep learning to our knowledge.  In this work, we show that in the continuous limit, our architecture has better representation capabilities than Neural ODEs. We also prove its universality in the linear case.
\vspace{-0.5em}
 \paragraph{Momentum in deep networks.} Some recent works \citep{he2020momentum, chun2020momentum,  nguyen2020momentumrnn, li2018optimization} have explored momentum in deep architectures. However, these methods differ from ours in their architecture and purpose. \citet{chun2020momentum} introduce a momentum to solve an optimization problem for which the iterations do not correspond to a ResNet. 
 \citet{nguyen2020momentumrnn} (resp. \citet{he2020momentum}) add momentum in the case of RNNs (different from ResNets) where the weights are tied to alleviate the vanishing gradient issue (resp. link the key and query encoder layers). \citet{li2018optimization} consider a particular case where the linear layer is tied and is a symmetric definite matrix. In particular, none of the mentioned architectures are invertible, which is one of the main assets of our method.
\vspace{-1em}
\paragraph{Second-order models} We show that adding a momentum term corresponds to an Euler integration scheme for integrating a second-order ODE. Some recently proposed architectures \citep{norcliffe2020second, rusch2020coupled, lu2017finite, massaroli2020dissecting} are also motivated by second-order differential equations. \citet{norcliffe2020second} introduce second-order dynamics to model second-order dynamical systems, whereas our model corresponds to a discrete set of equations in the continuous limit. Also, in our method, the neural network only acts on $x$, so that although momentum increases the dimension to $2d$, the computational burden of a forward pass is the same as a ResNet of dimension $d$. \citet{rusch2020coupled} propose second-order RNNs, whereas our method deals with ResNets. Finally, the formulation of LM-ResNet in \citet{lu2017finite} differs from our forward pass ($x_{n+1} = x_n + \gamma  v_n + (1-\gamma)f(x_n,\theta_n)$), even though they both lead to second-order ODEs. Importantly, none of these second-order formulations are invertible.
\vspace{-1em}
\paragraph{Notations}
For $d\in \NN^{*}$, we denote by $\RR^{d \times d}$, $\mathrm{GL}_d{(\RR)}$ and $\mathrm{D}_{d}^{\CC}(\RR)$ the set of real matrices, of invertible matrices, and of \textbf{real} matrices that are diagonalizable in $\CC$. 
\section{Momentum Residual Neural Networks}\label{section:mom_net}
We now introduce Momentum ResNet, a simple transformation of \textbf{any} ResNet into a model with a small memory requirement, and that can be seen in the continuous limit as a second-order ODE.
\vspace{-1em}
\subsection{Momentum ResNets}\label{section:Momentum_nets}

\paragraph{Adding a momentum term in the ResNet equations.}

For \textbf{any} ResNet which iterates~\eqref{eq:ResNet},
we define its Momentum counterpart, which iterates~\eqref{eq:Momentum ResNet},
where $(v_n)_n$ is the velocity  initialized with some value $v_0$ in $\RR^d$, and $\gamma \in [0,1]$ is the so-called momentum term. This approach generalizes gradient descent algorithm with momentum \citep{ruder2016overview},
for which $f$ is the gradient of a function to minimize. 

\paragraph{Initial speed and momentum term.}

 In this paper, we consider initial speeds $v_0$ that depend on $x_0$ through a simple relation. The simplest options are to set $v_0 = 0$ or $v_0 = f(x_0,\theta_0)$. We prove in Section~\ref{section:representation_capabilities} that this dependency between $v_0$ and $x_0$ has an influence on the set of mappings that Momentum ResNets can represent.
The parameter $\gamma$ controls how much a Momentum ResNet diverges from a ResNet, and also the amount of memory saving. The closer $\gamma$ is to $0$, the closer Momentum ResNets are to ResNets, but the less memory is saved. In our experiments, we use $\gamma =0.9$, which we find to work well in various applications.

\paragraph{Invertibility.}

Procedure~\eqref{eq:Momentum ResNet} is inverted through
\begin{equation}\label{eq:backward_momentum}
\left\{
\begin{array}{r@{\hspace{1mm}}l}
x_{n} &=  x_{n+1} -  v_{n+1}, \\
v_{n} &=   \frac1\gamma\left(v_{n+1} - (1-\gamma) f(x_n,\theta_n)\right),\\ 
\end{array}
\right.
\end{equation}
so that activations can be reconstructed on the fly during the backward pass in a Momentum ResNet. 
In practice, in order to exactly reverse the dynamics, the information lost by the finite-precision multiplication by $\gamma$ in~\eqref{eq:Momentum ResNet} has to be efficiently stored. We used the algorithm from \citet{10.5555/3045118.3045343} to perform this reversible multiplication. It consists in maintaining an information buffer, that is, an integer that stores the bits that are lost at each iteration, so that multiplication becomes reversible. We further describe the procedure in Appendix~\ref{app:memory_savings}.  Note that there is always a small loss of floating point precision due to the addition of the learnable mapping $f$. In practice, we never found it to be a problem: this loss in
precision can be neglected compared to the one due to the multiplication by $\gamma$.

\newcommand{\myrot}[1]{\rotatebox{80}{#1}}
 \begin{table}[h!]
    \centering
    \caption{\label{tab:comparison_invertible_architectures} {\bf Comparison of reversible residual architectures} }
    \vskip 0.15in
    \begin{tabular}{ |c|c|c|c|c|c|}
   \cline{2-6}
    \multicolumn{1}{c|}{} & 
  \myrot{Neur.ODE\ } &  \myrot{i-ResNet} & \myrot{i-RevNet} & \myrot{RevNet} & \myrot{\textbf{Mom.Net}}\\
 \hline
   Closed-form inversion & \cmark &  \xmark & \cmark & \cmark & \cmark \\
   \hline
 Same parameters  & \xmark &  \cmark & \xmark & \xmark & \cmark \\
 \hline
 Unconstrained training & \cmark &  \xmark & \cmark & \cmark & \cmark \\ 
 \hline
\end{tabular}
\vskip -0.15in
\end{table}

\paragraph{Drop-in replacement.}
Our approach makes it possible to turn any existing ResNet into a reversible one.  In other words, a ResNet can be transformed into its Momentum counterpart without changing the structure of each layer. For instance, consider a ResNet-152 \citep{he2015deep}. It is made of $4$ layers (of depth $3$, $8$, $36$ and $3$) and can easily be turned into its Momentum ResNet counterpart 
by changing the forward equations~\eqref{eq:ResNet} 
into~\eqref{eq:Momentum ResNet} in the $4$ layers. No further change is needed and Momentum ResNets take the exact same parameters as inputs: they are a \textit{drop-in replacement}. This is not the case of other reversible models. Neural ODEs \citep{chen2018neural} take continuous parameters as inputs. i-ResNets \citep{behrmann2019invertible} cannot be trained by plain SGD since the spectral norm of the weights requires constrained optimization. i-RevNets \citep{jacobsen2018irevnet} and RevNets \citep{gomez2017reversible} require to train two networks with their own parameters for each residual block, split the inputs across convolutional channels, and are half as deep as ResNets: they do not take the same parameters as inputs. Table~\ref{tab:comparison_invertible_architectures} summarizes the properties of reversible residual architectures.  We discuss in further details the differences between RevNets and Momentum ResNets in sections \ref{sec:role_momentum} and \ref{sec-numerics-lista}.

\subsection{Memory cost}\label{section:memory_cost}

Instead of storing the full data at each layer, we only need to store the bits lost at each multiplication by $\gamma$ (cf.\ ``intertibility''). For an architecture of depth $k$, this corresponds to storing $\log_2((\frac{1}{\gamma})^k) $ values for each sample ($\frac{k (1-\gamma)}{\ln(2)}$ if $\gamma$ is close to $1$).
To illustrate, we consider two situations where storing the activations is by far the main memory bottleneck. First, consider a toy feedforward architecture where $f(x,\theta) = W_2^{T} \sigma(W_1x +b)$, with $x \in \RR^d$ and $\theta = (W_1, W_2, b)$, where $W_1, W_2 \in \RR ^{p\times d}$ and $b \in \RR^p$, with a depth $k \in \NN$. We suppose that the weights are the same at each layer.
The training set is composed of $n$ vectors $x^1,...,x^n \in \RR^d$. 
For \textbf{ResNets}, we need to store the weights of the network and the values of all activations for the training set at each layer of the network.
In total, the memory needed is $O(k \times d \times n_{{batch}})$ per iteration. 
In the case of \textbf{Momentum ResNets}, if $\gamma$ is close to $1$ we get a memory requirement of $O((1-\gamma) \times k \times d \times n_{{batch}} )$.
This proves that the memory dependency in the depth $k$ is arbitrarily reduced by changing the momentum $\gamma$. The memory savings are confirmed in practice, as shown in Figure~\ref{fig:memo_theory}. 

\begin{figure}[H]
\centering
\includegraphics[width=0.7\columnwidth]{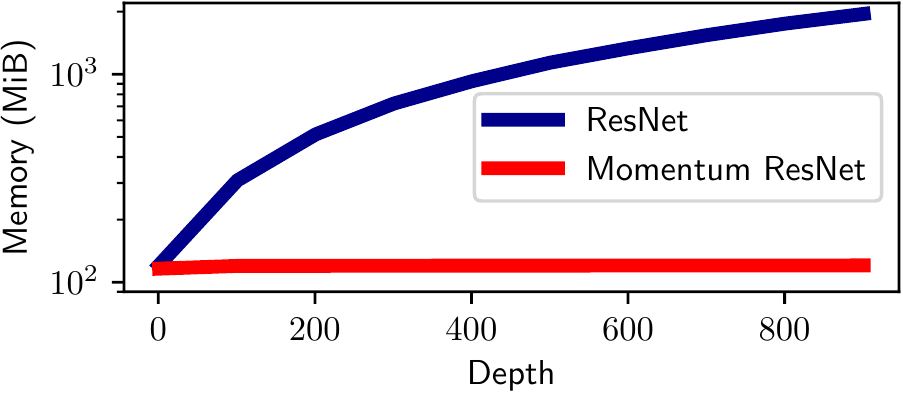} 
\caption{{\bf Comparison of memory needed} (calculated using a profiler) for computing gradients of the loss, with ResNets (activations are stored) and Momentum ResNets (activations are not stored). We set $n_{batch} = 500$, $d=500$ and $\gamma = 1 - \frac{1}{50k}$ at each depth. Momentum ResNets give a nearly constant memory footprint.
}\label{fig:memo_theory}
\vspace{-1em}
\end{figure}

As another example, consider a ResNet-152 \citep{he2015deep} which can be used for ImageNet classification \citep{deng2009imagenet}. Its layer named ``\texttt{conv4\_x}'' has a depth of $36$: it has $40$ M parameters, whereas storing the activations would require storing %
$50$ times more parameters. Since storing the activations is here the main obstruction, the memory requirement for this layer can be arbitrarily reduced by taking $\gamma$ close to $1$.

\subsection{The role of momentum}\label{sec:role_momentum}

When $\gamma$ is set to $0$ in~\eqref{eq:Momentum ResNet}, we recover a ResNet. Therefore, Momentum ResNets are a generalization of ResNets. When $\gamma \xrightarrow[]{} 1$, one can scale $f \to \frac{1}{1 - \gamma} f$ to get in~\eqref{eq:Momentum ResNet}
a symplectic scheme \citep{Hairer:1250576} that recovers a special case of other popular invertible neural network: RevNets \citep{gomez2017reversible} and Hamiltonian Networks \citep{chang2017reversible}. 
A RevNet iterates %
\begin{equation}
     \label{eq:revnet}
    v_{n+1} = v_n + \phi(x_n, \theta_n), 
          \quad
    x_{n+1} = x_n + \psi(v_{n+1}, \theta_n^{'}),
\end{equation}
where $\phi$ and $\psi$ are two learnable functions.

The usefulness of such architecture depends on the task.
RevNets have encountered success for classification and regression.
However, we argue that RevNets cannot work in some settings.
For instance, under mild assumptions, the RevNet iterations do not have attractive fixed points when the parameters are the same at each layer: $\theta_n = \theta$, $\theta_n' = \theta'$.
We rewrite \eqref{eq:revnet} as $(v_{n+1}, x_{n+1}) = \Psi(v_n, x_n)$ with $\Psi(v, x) = (v + \phi(x,\theta), x + \psi(v + \phi(x,\theta),\theta'))$.
\begin{proposition}[Instability of fixed points]
\label{prop:revnet_fix}
Let $(v^*, x^*)$ a fixed point of the RevNet iteration~\eqref{eq:revnet}. Assume that $\phi$ (resp. $\psi$) is differentiable at $x^*$ (resp. $v^*$), with Jacobian matrix $A$ (resp. $B$) $\in\RR^{d\times d}$. The Jacobian of $\Psi$ at $(v^*, x^*)$ is $ J(A, B) =  \big(\begin{smallmatrix}
  \mathrm{Id}_d & A \\
B & \mathrm{Id}_d + BA
\end{smallmatrix}\big)$. 
If $A$ and $B$ are invertible, then there exists $\lambda \in \Sp\left(J(A, B)\right)$ such that $|\lambda| \geq 1$ and $\lambda \neq 1$.
\end{proposition}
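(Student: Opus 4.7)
The plan is to combine two elementary block-matrix determinant computations with a spectral-radius argument.

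First, I would show that $\det J(A,B) = 1$. Subtracting $B$ times the top block row from the bottom block row gives the upper block-triangular matrix
\[
\begin{pmatrix} \mathrm{Id}_d & A \\ 0 & \mathrm{Id}_d + BA - BA \end{pmatrix} = \begin{pmatrix} \mathrm{Id}_d & A \\ 0 & \mathrm{Id}_d \end{pmatrix},
\]
whose determinant is $1$; elementary block row operations preserve determinant, so $\det J(A,B) = 1$. Hence the product (with multiplicity) of the $2d$ eigenvalues of $J(A,B)$ equals $1$, which forces the spectral radius to satisfy $\rho(J(A,B)) \geq 1$: if every eigenvalue had modulus strictly less than $1$, the product of their moduli would be strictly less than $1$, contradicting $|\det J(A,B)| = 1$. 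This already produces some $\lambda \in \Sp(J(A,B))$ with $|\lambda| \geq 1$.

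Second, I would rule out the possibility that every such $\lambda$ equals $1$ by directly evaluating the characteristic polynomial at $1$. We have
\[
J(A,B) - \mathrm{Id}_{2d} = \begin{pmatrix} 0 & A \\ B & BA \end{pmatrix},
\]
and swapping the two block rows (which multiplies the determinant by $(-1)^d$) produces $\begin{pmatrix} B & BA \\ 0 & A\end{pmatrix}$, whose determinant is $\det(A)\det(B)$. Thus $\det(J(A,B) - \mathrm{Id}_{2d}) = (-1)^d \det(A)\det(B) \neq 0$ by invertibility of $A$ and $B$, so $1 \notin \Sp(J(A,B))$. Any eigenvalue produced by the first step is then automatically different from $1$, concluding the proof.

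There is no genuine obstacle here; the argument is essentially two block-matrix computations together with the elementary fact that $|\det| = \prod|\lambda_i|$. A temptation would be to push further and exhibit the symplectic-type pairing (eigenvalues grouped as $\{\lambda, 1/\lambda\}$ via the scalar equation $\lambda + 1/\lambda = 2 + \mu$ for $\mu \in \Sp(BA)$, obtained by computing $\det(J - \lambda \mathrm{Id})$ through the Schur complement when $\lambda \neq 1$); this gives a more structural picture of the instability, but it is not required for the stated conclusion and can be relegated to a remark.
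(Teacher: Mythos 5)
Your proof is correct, and it takes a genuinely different route from the paper's. The paper works at the level of eigenvectors: it writes out the eigenvalue equations for $J(A,B)$, shows that any eigenvalue $\lambda \neq 0,1$ must satisfy $\lambda^2 - (2+\mu)\lambda + 1 = 0$ for some $\mu \in \mathrm{Sp}(AB)$, and then argues conversely that for each nonzero $\mu \in \mathrm{Sp}(AB)$ both roots of this quadratic are eigenvalues of $J(A,B)$; since their product is $1$ and neither equals $1$ (as $\mu \neq 0$), one of them has modulus at least $1$. You instead make two global determinant computations: $\det J(A,B) = 1$ (which, note, requires no invertibility at all — it also follows from the factorization $J = \bigl(\begin{smallmatrix} \mathrm{Id}_d & 0 \\ B & \mathrm{Id}_d\end{smallmatrix}\bigr)\bigl(\begin{smallmatrix} \mathrm{Id}_d & A \\ 0 & \mathrm{Id}_d\end{smallmatrix}\bigr)$ reflecting the two-stage RevNet update) forces some eigenvalue to have modulus $\geq 1$, and $\det(J(A,B) - \mathrm{Id}_{2d}) = (-1)^d\det(A)\det(B) \neq 0$ excludes $1$ from the spectrum entirely. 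Both block-matrix manipulations are valid, and the logic is airtight. Your argument is shorter and more elementary; what the paper's approach buys in exchange is the complete description of $\mathrm{Sp}(J(A,B))$ in terms of $\mathrm{Sp}(AB)$ and the reciprocal pairing $\{\lambda, 1/\lambda\}$ of eigenvalues — the symplectic-type structure you mention at the end — which also explains the continuous-time remark following the proposition. As you say, that finer information is not needed for the stated conclusion.
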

This shows that $(v^*, x^*)$ cannot be a stable fixed point.
As a consequence, in practice, a RevNet cannot have converging iterations: according to~\eqref{eq:revnet}, if $x_n$ converges then $v_n$ must also converge, and their limit must be a fixed point. 
The previous proposition shows that it is impossible.

This result suggests that RevNets should perform poorly in problems where one expects the iterations of the network to converge.
For instance, as shown in the experiments in Section~\ref{sec-numerics-lista}, this happens when we use reverible dynamics in order to \emph{learn to optimize} \citep{10.5555/3045118.3045343}.
In contrast, the proposed method can converge to a fixed point as long as the momentum term $\gamma$ is strictly less than $1$.
\vspace{-1em}
\paragraph{Remark.} Proposition \ref{prop:revnet_fix} has a continuous counterpart. Indeed, in the continuous limit, \eqref{eq:revnet} writes $\dot{v} = \phi(x,\theta),  \quad \dot{x} = \psi(v,\theta')$. The corresponding Jacobian in $(v^{*}, x^{*})$ is $\big(\begin{smallmatrix}
  0 & A \\
B &  0
\end{smallmatrix}\big)$. The eigenvalues of this matrix are the square roots of those of $AB$: they cannot all have a real part $< 0$ (same stability issue in the continuous case).

\subsection{Momentum ResNets as continuous models}

\begin{figure}[H]
\includegraphics[width=\columnwidth]{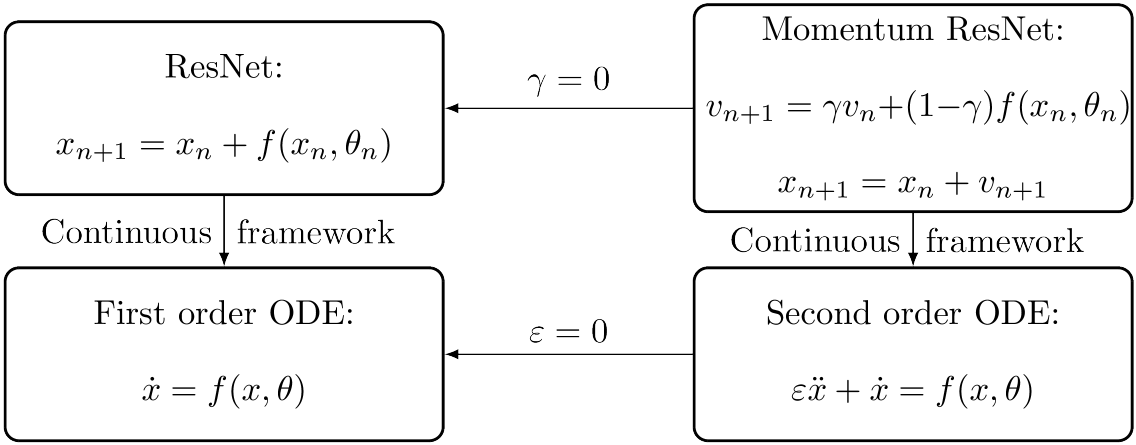} 
\caption{\textbf{Overview of the four different paradigms.} }\label{fig:recap}
\vspace{-1em}
\end{figure}

\paragraph{Neural ODEs: ResNets as first-order ODEs.}
The ResNets equation~\eqref{eq:ResNet} with initial condition $x_0$ (the input of the ResNet) can be seen as a discretized Euler scheme of the ODE $\dot{x}=f(x,\theta)$ with $x(0) = x_0$.
Denoting $T$ a time horizon, the neural ODE maps the input $x(0)$ to the output $x(T)$, and, as in \citet{chen2018neural}, is trained by minimizing a loss $L(x(T),\theta)$. 

\paragraph{Momentum ResNets as second-order ODEs.}

Let $\epsilon = \frac{1}{1-\gamma}$. We can then rewrite \eqref{eq:Momentum ResNet} as 
\begin{equation*}\label{eq:mom_eq}
v_{n+1} = v_{n} +  \frac{f(x_{n},\theta_n)-v_n}{\varepsilon}, 
\quad
x_{n+1} = x_{n} +   v_{n+1},
\end{equation*}
which corresponds to a Verlet integration scheme \citep{Hairer:1250576} with step size $1$ of the differential equation $\varepsilon \ddot{x} + \dot{x} = f(x,\theta). $
Thus, in the same way that ResNets can be seen as discretization of first-order ODEs, Momentum ResNets can be seen as discretization of second-order ones. 
Figure~\ref{fig:recap} sums up these ideas.
\section{Representation capabilities}\label{section:representation_capabilities}
We now turn to the analysis of the representation capabilities of Momentum ResNets in the continuous setting. In particular, we precisely characterize the set of mappings representable by Momentum ResNets with linear residual functions. 
\subsection{Representation capabilities of first-order ODEs}\label{section:representation_capabilitie}
We consider the first-order model 
\begin{equation}\label{eq:first_order_ODE}
  \dot{x} = f(x,\theta)
   \quad\text{with}\quad
  x(0) = x_0. 
\end{equation}

We denote by $\phi_t(x_0)$ the solution at time $t$ starting at initial condition $x(0) = x_0$. It is called the \textit{flow} of the ODE. For all $t \in [0,T]$, where $T$ is a time horizon, $\phi_t$ is a homeomorphism: it is continuous, bijective with continuous inverse. 

\paragraph{First-order ODEs are not universal approximators.}

ODEs such as~\eqref{eq:first_order_ODE} are not universal approximators. Indeed, the function mapping an initial condition to the flow at a certain time horizon $T$ cannot represent every mapping $x_0 \mapsto h(x_0)$. For instance when $d= 1$, the mapping $x \to -x$ cannot be approximated by a first-order ODE, since $1$ should be mapped to $-1$ and $0$ to $0$, which is impossible without intersecting trajectories~\citep{dupont2019augmented}. In fact, the homeomorphisms represented by~\eqref{eq:first_order_ODE} are orientation-preserving: if $K \subset \RR^d$ is a compact set and $h : K\xrightarrow{} \RR^d$ is a homeomorphism represented by~\eqref{eq:first_order_ODE}, then $h$ is in the connected component of the identity function on $K$ for the topology of the uniform convergence (see details in Appendix~\ref{app:prop_connected}). 

\subsection{Representation capabilities of second-order ODEs}
We consider the second-order model for which we recall that Momentum ResNets are a discretization: 
\begin{equation}\label{eq:pertu}
\varepsilon \ddot{x} + \dot{x} = f(x,\theta)
\quad\text{with}\quad
(x(0),\dot{x}(0)) = (x_0,v_0).
\end{equation}
In Section~\ref{sec:role_momentum}, we showed that Momentum ResNets generalize existing models when setting $\gamma = 0$ or $1$. We now state the continuous counterparts of these results.
Recall that $\frac{1}{1-\gamma} = \varepsilon$. When $\varepsilon \xrightarrow{} 0$, we recover the first-order model.
\begin{proposition}[Continuity of the solutions]\label{prop:eps_0}%
We let $x^*$ (resp. $x_{\varepsilon}$) be the solution of~\eqref{eq:first_order_ODE} (resp.~\eqref{eq:pertu}) on $[0,T]$, with initial conditions $x^*(0) = x_{\varepsilon}(0)=x_0$ and $\dot{x}_{\varepsilon}(0) = v_0$.
Then  $\|x_{\varepsilon} - x^* \|_{\infty} \xrightarrow[]{} 0$ as $\varepsilon \xrightarrow{} 0$.
\end{proposition}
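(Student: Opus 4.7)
The plan is to treat \eqref{eq:pertu} as a singularly perturbed system. As $\varepsilon \to 0^+$ the order drops from two to one, and a boundary layer of width $O(\varepsilon)$ generically appears in $\dot x_\varepsilon$ (since $v_0$ need not equal $f(x_0,\theta)$). However, this boundary layer only has $O(\varepsilon)$ impact on the position, and the initial position $x_0$ is preserved exactly, so uniform convergence should hold on the whole interval $[0,T]$, not merely on $[\delta,T]$.

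First I would rewrite \eqref{eq:pertu} as the first-order system $\dot x_\varepsilon = v_\varepsilon$, $\varepsilon \dot v_\varepsilon + v_\varepsilon = f(x_\varepsilon,\theta)$ with $(x_\varepsilon(0),v_\varepsilon(0)) = (x_0,v_0)$, and apply variation of constants to the $v$-equation to obtain
\[
v_\varepsilon(t) \;=\; e^{-t/\varepsilon}\, v_0 \;+\; \frac{1}{\varepsilon}\int_0^t e^{-(t-s)/\varepsilon}\, f(x_\varepsilon(s),\theta)\,ds.
\]
Integrating this in $t$ and swapping the order of integration via Fubini yields the explicit representation
\[
x_\varepsilon(t) \;=\; x_0 \;+\; \varepsilon(1-e^{-t/\varepsilon})\, v_0 \;+\; \int_0^t \bigl(1-e^{-(t-s)/\varepsilon}\bigr)\, f(x_\varepsilon(s),\theta)\,ds,
\]
which I would compare term by term with $x^*(t) = x_0 + \int_0^t f(x^*(s),\theta)\,ds$.

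Assuming $f(\cdot,\theta)$ is locally Lipschitz with constant $L$ and bounded by $M$ on a fixed tube around the trajectory of $x^*$, the difference $x_\varepsilon - x^*$ splits into a boundary-layer residual of size $O(\varepsilon)$ (the terms $\varepsilon(1-e^{-t/\varepsilon})v_0$ and $\int_0^t e^{-(t-s)/\varepsilon} f(x_\varepsilon(s),\theta)\,ds$, the second bounded by $M\varepsilon$) plus a Lipschitz-controlled term of size $L\int_0^t\|x_\varepsilon-x^*\|\,ds$. Gronwall's inequality then gives $\|x_\varepsilon - x^*\|_\infty \le (\|v_0\| + M)\,\varepsilon\, e^{LT}$, which vanishes as $\varepsilon \to 0$.

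The main obstacle is the a priori confinement of $x_\varepsilon$ to the tube where $f$ is bounded and Lipschitz, uniformly in $\varepsilon$. Under a global Lipschitz assumption on $f$ this is automatic; otherwise, I would close a standard bootstrap: let $\tau_\varepsilon$ be the first time $\|x_\varepsilon(t)-x^*(t)\|$ exceeds, say, $1$. The estimates above remain valid on $[0,\tau_\varepsilon]$ and yield $\|x_\varepsilon-x^*\|_\infty \le C\varepsilon$ on that interval, which for $\varepsilon$ small enough is incompatible with $\tau_\varepsilon < T$, forcing $\tau_\varepsilon = T$ and closing the argument.
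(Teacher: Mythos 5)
Your proof is correct, and it takes a genuinely different route from the paper. The paper treats the problem abstractly: it encodes both ODEs as the zero set of a functional $\Psi(u,\varepsilon)$ on a space of trajectories, checks that $\partial_u\Psi(u^*,0)$ is invertible (its kernel equation is a homogeneous linear ODE with zero initial data, hence trivial), and invokes the implicit function theorem to obtain a continuous branch $\varepsilon\mapsto u_\varepsilon$, whence uniform convergence. You instead work explicitly: Duhamel's formula for the fast variable, Fubini to get the integral representation of $x_\varepsilon$, a term-by-term comparison isolating the boundary-layer contributions $\varepsilon(1-e^{-t/\varepsilon})v_0$ and $\int_0^t e^{-(t-s)/\varepsilon}f(x_\varepsilon(s),\theta)\,ds$ (each $O(\varepsilon)$), and Gronwall plus a bootstrap to confine $x_\varepsilon$ to a tube where $f$ is bounded and Lipschitz. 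Your computations check out, and the bootstrap correctly closes the a priori confinement issue, which also guarantees existence of $x_\varepsilon$ up to time $T$ for small $\varepsilon$. What your approach buys is a quantitative rate, $\|x_\varepsilon-x^*\|_\infty\le(\|v_0\|+M)\varepsilon e^{LT}$, which the paper's soft argument does not provide; it also avoids the delicate choice of function space on which to run the implicit function theorem. What it costs is the explicit regularity bookkeeping (local Lipschitz continuity and boundedness of $f$ along the limiting trajectory, uniformly in $t$ since $\theta=\theta(t)$ is time-dependent), though these hypotheses are already implicit in the well-posedness assumed by the proposition.
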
 
The proof of this result relies on the implicit function theorem and can be found in Appendix~\ref{app:prop_eps_0}.
Note that Proposition~\ref{prop:eps_0} is true whatever the initial speed $v_0$. 
When $\varepsilon \xrightarrow{} +\infty$, one needs to rescale $f$ to study the asymptotics: the solution of $\ddot{x} + \frac{1}{\varepsilon} \dot{x} = f(x,\theta)$ converges to the solution of $\ddot{x} = f(x,\theta)$ (see details in Appendix~\ref{app:prop_eps_infty}).
 These results show that in the continuous regime, Momentum ResNets also interpolate between $\dot{x} = f(x,\theta)$ and $\ddot{x} = f(x,\theta)$.%
 
\paragraph{Representation capabilities of a model~\eqref{eq:pertu} on the $x$ space.}
We recall that we consider initial speeds $v_0$ that can depend on the input $x_0 \in \RR^d$ (for instance $v_0 = 0$ or $v_0 = f(x_0,\theta_0)$). We therefore assume $\phi_t :\RR^d \mapsto  \RR^d$ such that $\phi_t(x_0)$ is solution of~\eqref{eq:pertu}. We emphasize that $\phi_t$ is not always a homeomorphism. For instance, 
$\phi_t(x_0) = x_0\exp{(-t/2)}\cos{(t/2)}$
solves
$   \ddot{x}  + \dot{x} =  - \frac{1}{2} x(t) $ with $
    (x(0),\dot{x}(0)) = (x_0, -\frac{x_0}{2})$. All the trajectories intersect at time $\pi$. It means that Momentum ResNets can learn mappings that are not homeomorphisms, which suggests that increasing $\epsilon$ should lead to better representation capabilities. The first natural question is thus whether, given $h : \RR^d \xrightarrow{} \RR^d$, there exists some $f$ such that $\phi_t$ associated to~\eqref{eq:pertu} satisfies $
\forall x \in \RR^d, \phi_1(x) = h(x)$.
In the case where $v_0$ is an arbitrary function of $x_0$, the answer is trivial since~\eqref{eq:pertu} can represent any mapping, as proved in Appendix~\ref{app:learn_init}.
This setting does not correspond to the common use case of ResNets, which take advantage of their depth, so it is important to impose stronger constraints on the dependency between $v_0$ and $x_0$.  
For instance, the next proposition shows that even if one imposes $v_0 = f(x_0,\theta_0)$, a second-order model is at least as general as a first-order one.

\begin{proposition}[Momentum ResNets are at least as general]\label{prop:bigger_set}
There exists a function $\hat{f}$ such that for all $x$ solution of~\eqref{eq:first_order_ODE}, $x$ is also solution of the second-order model
$\varepsilon \ddot{x} + \dot{x} = \hat{f}(x,\theta)$
with 
$(x(0),\dot{x}(0)) = (x_0,f(x_0,\theta_0))$.
\end{proposition}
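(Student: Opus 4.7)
The plan is to produce $\hat f$ by an explicit construction: given that $x$ satisfies the first-order ODE $\dot x = f(x,\theta)$, I can read off $\ddot x$ by differentiating once more in time, and then simply define $\hat f$ so that the second-order equation becomes an algebraic identity on solutions of the first-order one.

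First I would fix a solution $x(\cdot)$ of $\dot x = f(x,\theta)$ with $x(0) = x_0$. Assuming $f$ is $C^1$ in its first argument (which is harmless in our setting, and is needed anyway for the second-order model to make sense), the chain rule gives
\begin{equation*}
\ddot x(t) \;=\; \frac{d}{dt} f(x(t),\theta) \;=\; \partial_x f(x(t),\theta)\,\dot x(t) \;=\; \partial_x f(x(t),\theta)\,f(x(t),\theta),
\end{equation*}
so the natural definition is
\begin{equation*}
\hat f(x,\theta) \;:=\; f(x,\theta) \;+\; \varepsilon\,\partial_x f(x,\theta)\,f(x,\theta).
\end{equation*}
With this choice, substituting $\dot x = f(x,\theta)$ and the computed expression for $\ddot x$ into $\varepsilon\ddot x + \dot x$ yields $\hat f(x,\theta)$ identically along the trajectory, so $x$ indeed solves $\varepsilon\ddot x + \dot x = \hat f(x,\theta)$.

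It remains to check the initial data of the second-order problem, which is immediate: $x(0) = x_0$ by hypothesis, and $\dot x(0) = f(x(0),\theta_0) = f(x_0,\theta_0)$ because $x$ solves the first-order ODE at $t=0$ (with $\theta_0$ interpreted as the parameter value at time $0$). There is really no main obstacle here; the whole content is the chain-rule identity plus an algebraic rearrangement, and the only regularity hypothesis used is that $f$ is continuously differentiable in $x$. The proposition could be sharpened to say that the class of vector fields realizable by the second-order model strictly contains that of first-order ones (as hinted by the example with crossing trajectories appearing just before the statement), but for the claim as stated the construction above is sufficient.
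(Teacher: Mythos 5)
Your construction is the same as the paper's in spirit --- differentiate the first-order equation once and fold the resulting expression for $\ddot x$ into the definition of $\hat f$ --- but your chain-rule step is incomplete in the setting of this paper, and that is a genuine gap. Here $\theta$ is the continuous-depth analogue of the per-layer weights $\theta_n$, i.e.\ a time-dependent parameter $\theta(t)$ (the statement itself refers to $\theta_0$, the value at $t=0$). Differentiating $\dot x(t) = f(x(t),\theta(t))$ therefore gives
\begin{equation*}
\ddot x \;=\; \partial_x f(x,\theta)\,f(x,\theta) \;+\; \partial_\theta f(x,\theta)\,\dot\theta ,
\end{equation*}
and your $\hat f(x,\theta) = f(x,\theta) + \varepsilon\,\partial_x f(x,\theta)\,f(x,\theta)$ satisfies $\varepsilon\ddot x + \dot x = \hat f(x,\theta)$ only when $\partial_\theta f(x(t),\theta(t))\,\dot\theta(t) \equiv 0$, i.e.\ essentially in the tied-weights (autonomous) case. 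For a generic trajectory of a non-autonomous first-order model the identity you claim fails by exactly the term $\varepsilon\,\partial_\theta f(x,\theta)\,\dot\theta$.

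The fix is exactly what the paper does: set $\hat f(x,\theta) = f(x,\theta) + \varepsilon\bigl[\partial_x f(x,\theta)\,f(x,\theta) + \partial_\theta f(x,\theta)\,\dot\theta\bigr]$, where the new residual function is allowed to depend on the (known) time-dependence of the parameters. With that correction the rest of your argument --- the algebraic substitution and the verification of the initial data $x(0)=x_0$, $\dot x(0) = f(x_0,\theta_0)$ --- goes through verbatim.
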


Furthermore, even with the restrictive initial condition $v_0=0$, $x \mapsto \lambda x$ for $\lambda > -1$ can always be represented by a second-order model~\eqref{eq:pertu}
(see details in Appendix~\ref{app:prop_lambda}).
This supports the claim that the set of representable mappings increases with $\varepsilon$. 

\subsection{Universality of Momentum ResNets with linear residual functions}

As a first step towards a theoretical analysis of the universal representation capabilities of Momentum ResNets, we now investigate the linear residual function case.
 Consider the second-order linear ODE 
\begin{equation}\label{eq:second_order_lin}
  \varepsilon \ddot{x}  + \dot{x} = \theta x 
  \quad\text{with}\quad
 (x(0),\dot{x}(0)) = (x_0,0),
\end{equation}
with $\theta \in \RR^{d\times d}$. We assume without loss of generality that the time horizon is $T = 1$. We have the following result.
\begin{proposition}[Solution of~\eqref{eq:second_order_lin}]\label{prop:sol_second_order}
At time $1$, \eqref{eq:second_order_lin} defines the linear mapping $x_0 \mapsto \phi_1(x_0) = \Psi_{\varepsilon}(\theta)x_0$ where 
\begin{equation*}
\Psi_{\varepsilon}(\theta)=  e^{-\frac{1}{2\varepsilon}} \sum_{n=0}^{+\infty} \left(\frac{1}{(2n)!} + \frac{1}{2\varepsilon(2n+1)!}\right) \left(\frac{\theta}{\varepsilon} + \frac{\mathrm{Id}_d}{4 \varepsilon^2}\right)^n.
\end{equation*}
\end{proposition}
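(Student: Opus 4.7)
The plan is to reduce the second-order linear ODE to one without a first-derivative term via an exponential substitution, then solve the resulting constant-coefficient equation $\ddot{y} = Ay$ by a matrix power series, and finally unwind the substitution at $t=1$.

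First, I would set $x(t) = e^{-t/(2\varepsilon)} y(t)$. A direct computation of $\dot{x}$ and $\ddot{x}$ and substitution into \eqref{eq:second_order_lin} cancels the $\dot{y}$ terms and leaves
\begin{equation*}
\ddot{y} = \left(\frac{\theta}{\varepsilon} + \frac{\mathrm{Id}_d}{4\varepsilon^2}\right) y =: A\, y.
\end{equation*}
The initial conditions transform as $y(0) = x_0$ and, using $\dot{x}(0) = -\tfrac{1}{2\varepsilon} y(0) + \dot{y}(0) = 0$, one gets $\dot{y}(0) = \tfrac{x_0}{2\varepsilon}$.

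Next, I would solve $\ddot{y} = Ay$ by a formal power series $y(t) = \sum_{n \geq 0} a_n t^n$. Matching coefficients gives the recursion $a_{n+2} = \tfrac{A\, a_n}{(n+2)(n+1)}$, so the even- and odd-indexed coefficients decouple to
\begin{equation*}
a_{2n} = \frac{A^n}{(2n)!}\, y(0), \qquad a_{2n+1} = \frac{A^n}{(2n+1)!}\, \dot{y}(0).
\end{equation*}
The resulting two series converge absolutely on all of $\RR$ for any matrix $A$ (their norms are bounded by $\cosh(\|A\|^{1/2} |t|)$ and $\sinh(\|A\|^{1/2} |t|)/\|A\|^{1/2}$), so the interchange of $\ddot{\cdot}$ and summation is legitimate and the series genuinely solves the ODE. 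Uniqueness of solutions to linear ODEs then identifies this with $y(t)$.

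Finally, at $t = 1$, I would substitute the two initial conditions and collect:
\begin{equation*}
y(1) = \sum_{n=0}^{\infty} \frac{A^n}{(2n)!}\, x_0 + \frac{1}{2\varepsilon}\sum_{n=0}^{\infty} \frac{A^n}{(2n+1)!}\, x_0 = \sum_{n=0}^{\infty}\left(\frac{1}{(2n)!} + \frac{1}{2\varepsilon(2n+1)!}\right) A^n x_0,
\end{equation*}
and then multiply by the inverse substitution factor $e^{-1/(2\varepsilon)}$ to obtain $x(1) = \Psi_\varepsilon(\theta)\, x_0$ as stated.

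There is no real obstacle here: the only point that needs mild care is that $A$ need not admit a square root or be diagonalizable, so I avoid writing $\cosh(\sqrt{A}\, t)$ or $\sinh(\sqrt{A}\, t)/\sqrt{A}$ and instead work entirely with the power series in $A$, whose convergence for arbitrary $A \in \RR^{d \times d}$ follows from the factorial denominators. Linearity of the map $x_0 \mapsto \phi_1(x_0)$ is then immediate from the linearity of the ODE and of the substitution.
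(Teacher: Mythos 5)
Your proof is correct, and it reaches the stated formula by a different route than the paper. The paper rewrites \eqref{eq:second_order_lin} as a first-order system in the phase space $(x,v)\in\RR^{2d}$ and reads off $\phi_1$ from the exponential of the $2d\times 2d$ block matrix $\bigl(\begin{smallmatrix}0 & \mathrm{Id}_d \\ \theta/\varepsilon & -\mathrm{Id}_d/\varepsilon\end{smallmatrix}\bigr)$, leaving the computation of that exponential to the reader (and noting the result can be checked by differentiation). You instead use the classical damping-removal substitution $x = e^{-t/(2\varepsilon)}y$, which reduces the equation to the undamped form $\ddot{y}=Ay$ with $A=\theta/\varepsilon+\mathrm{Id}_d/(4\varepsilon^2)$, and then solve that by an explicit power series with the transformed initial data $y(0)=x_0$, $\dot y(0)=x_0/(2\varepsilon)$. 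Your route is more self-contained: it makes the origin of the shift $\mathrm{Id}_d/(4\varepsilon^2)$ and of the two coefficient families $1/(2n)!$ and $1/(2\varepsilon(2n+1)!)$ transparent, and it supplies the convergence and uniqueness details the paper elides. Your remark about avoiding $\cosh(\sqrt{A}\,t)$ in favor of the series in $A$ is well taken and is precisely why the proposition's formula is itself stated as a power series in $A$. The paper's phase-space formulation, for its part, is the one reused elsewhere (e.g., in the proof of Proposition~\ref{prop:eps_0}), so it buys uniformity of presentation rather than extra rigor.
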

Characterizing the set of mappings representable by~\eqref{eq:second_order_lin} is thus equivalent to precisely analyzing the range $\Psi_{\varepsilon}(\RR^{d\times d})$.

\paragraph{Representable mappings of a first-order linear model.}

When $\varepsilon \xrightarrow[]{} 0$, Proposition~\ref{prop:eps_0} shows that
 $\Psi_{\varepsilon}(\theta) \xrightarrow{} \Psi_0(\theta) =  \exp \theta$. 
The range of the matrix exponential is indeed the set of representable mappings of a first order linear model 
\begin{equation}\label{eq:first_order_lin}
 \dot{x} = \theta x
 \quad\text{with}\quad
 x(0) = x_0
\end{equation}
and this range is known~\citep{andrica2010image} to be
$
\Psi_0(\RR^{d \times d}) = \exp{(\RR^{d \times d})} = \{M^2 \mid M \in \mathrm{GL}_d(\RR)\}
$.
This means that one can only learn mappings that are the square of invertible mappings with a first-order linear model~\eqref{eq:first_order_lin}. 
To ease the exposition and exemplify the impact of increasing $\epsilon>0$, we now consider the case of matrices with real coefficients that are diagonalizable in $\CC$, $\mathrm{D}_{d}^{\CC}(\RR)$. 
Note that the general setting of arbitrary matrices is exposed in Appendix~\ref{app:th_representability} using Jordan decomposition. Note also that $\mathrm{D}_{d}^{\CC}(\RR)$ is dense in $\RR^{d \times d}$ \citep{10.2307/2160975}. 
Using Theorem~1 from~\citet{culver1966existence}, we have that if $D \in \mathrm{D}_{d}^{\CC}(\RR)$, then $D$ is represented by a first-order model~\eqref{eq:first_order_lin} \textbf{if and only if} $D$ is non-singular and for all eigenvalues $\lambda \in \mathrm{Sp}(D)$ with $\lambda <0$, $\lambda$ is of even multiplicity order. This is restrictive because it forces negative eigenvalues to be in pairs. We now generalize this result and show that increasing $\varepsilon > 0$ leads to less restrictive conditions.
\begin{figure}[H]
\centering
 \includegraphics[width=\columnwidth]{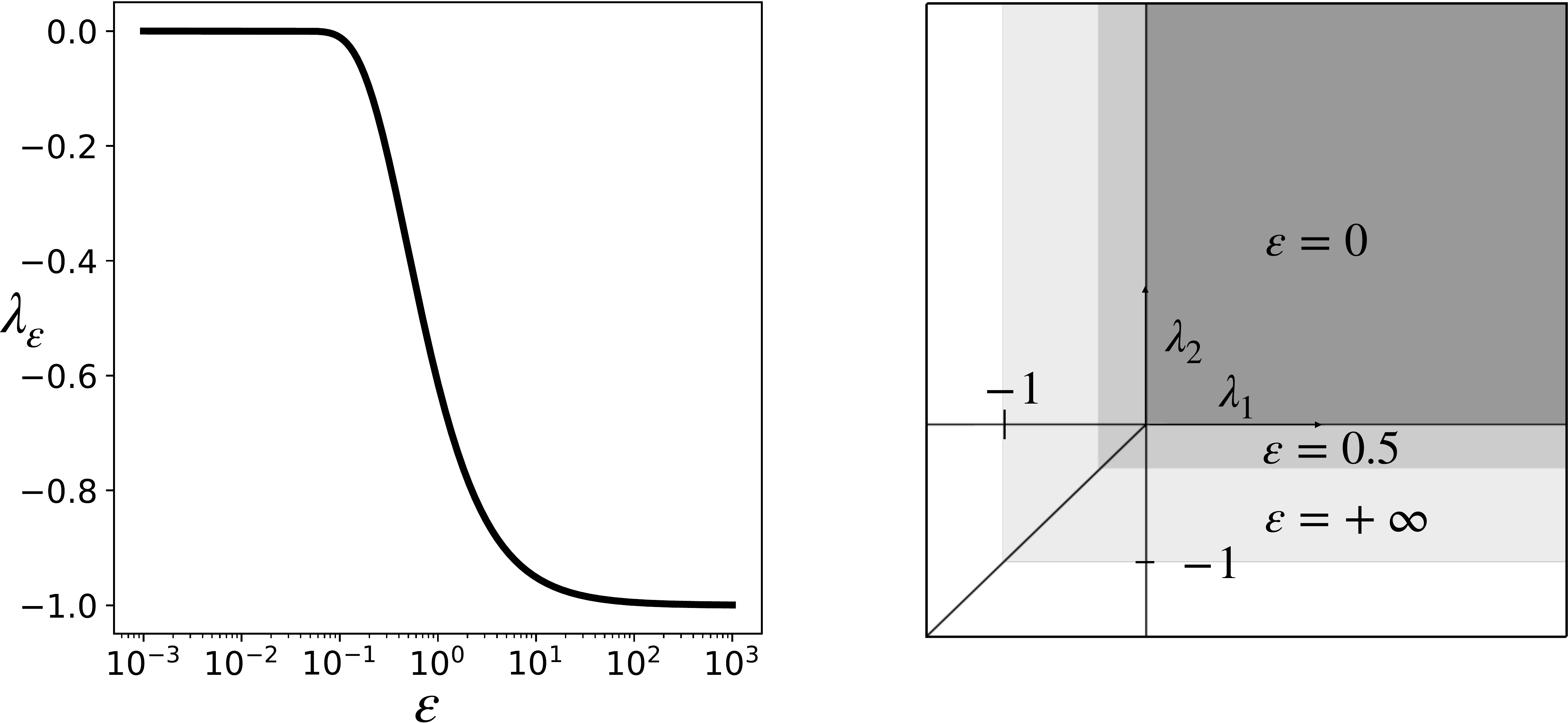}
\caption{Left: \textbf{Evolution of $\lambda_\varepsilon$ defined in Theorem~\ref{th:representability}}. $\lambda_\varepsilon$ is non increasing, stays close to $0$ when $\varepsilon \ll 1$ and close to $-1$ when $\varepsilon \geq 2$. Right: \textbf{Evolution of the real eigenvalues} $\lambda_1$ and $\lambda_2$ of representable matrices in $\mathrm{D}_{d}^{\CC}(\RR)$ by~\eqref{eq:second_order_lin} when $d = 2$ for different values of $\varepsilon$. The grey colored areas correspond to the different representable eigenvalues. When $\varepsilon=0$, $\lambda_1 = \lambda_2$ or $\lambda_1 > 0$ and $\lambda_2 > 0$. When $\varepsilon >0$, single negative eigenvalues are acceptable.}\label{fig:linear}
\vspace{-1em}
\end{figure}
\paragraph{Representable mappings by a second-order linear model.}
Again, by density and for simplicity, we focus on matrices in $\mathrm{D}_{d}^{\CC}(\RR)$, and we state and prove the general case in Appendix~\ref{app:th_representability}, making use of Jordan blocks decomposition of matrix functions \citep{gant} and localization of zeros of entire functions \citep{runckel1969zeros}. 
The range of $\Psi_{\varepsilon}$ over the reals has for form $\Psi_{\varepsilon}(\RR) = [\lambda_\epsilon,+\infty[$. It plays a pivotal role to control the set of representable mappings, as stated in the theorem bellow. Its minimum value can be computed conveniently since it satisfies
$\lambda_\epsilon = \min_{\alpha \in \RR}G_{\varepsilon}(\alpha) $ where
$G_{\varepsilon}(\alpha) \triangleq \exp{(-\frac{1}{2 \varepsilon}})(\cos(\alpha) + \frac{1}{2\varepsilon \alpha} \sin(\alpha))$.
\begin{theorem}[Representable mappings with linear residual functions]\label{th:representability}
Let $D \in \mathrm{D}_{d}^{\CC}(\RR)$.
Then $D$ is represented by a second-order model~\eqref{eq:second_order_lin} \textbf{if and only if} $\forall \lambda \in \mathrm{Sp}(D)$ such that
$\lambda < \lambda_\epsilon $, $\lambda$ is of even multiplicity order.
\end{theorem}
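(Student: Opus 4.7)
My plan is to exploit the fact that $\Psi_\varepsilon$ defined by the power series in Proposition~\ref{prop:sol_second_order} extends to an entire function $\Psi_\varepsilon:\CC\to\CC$, and that for any $\theta\in\RR^{d\times d}$ the spectrum of $\Psi_\varepsilon(\theta)$ equals $\Psi_\varepsilon(\mathrm{Sp}(\theta))$, counted with algebraic multiplicity. Since $D$ and the candidate preimage $\theta$ will both be diagonalizable over $\CC$, the problem reduces to a spectral matching problem: describe the eigenvalues achievable, and then glue them into a \emph{real} matrix.

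First I would analyse the scalar map $\Psi_\varepsilon:\CC\to\CC$. Writing $w=\theta/\varepsilon+1/(4\varepsilon^2)$ one sees that for real $\theta$ the value $\Psi_\varepsilon(\theta)$ is either the $\cosh$-type expression $e^{-1/(2\varepsilon)}\bigl(\cosh\sqrt w+\tfrac{1}{2\varepsilon}\sinh\sqrt w/\sqrt w\bigr)$ (when $w\geq 0$) or the trigonometric expression $G_\varepsilon(\sqrt{-w})$ (when $w<0$). Hence the real range $\Psi_\varepsilon(\RR)$ is exactly $[\lambda_\varepsilon,+\infty)$ by the very definition of $\lambda_\varepsilon$. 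Over $\CC$, $\Psi_\varepsilon$ is non-constant entire of order $1/2$ (essentially $\cosh\sqrt w$), so Picard's theorem, or equivalently the explicit localisation of zeros of entire functions of \citet{runckel1969zeros} applied to $\Psi_\varepsilon-\lambda$, shows that every value in $\CC$ is attained. In particular, for any real $\lambda<\lambda_\varepsilon$ there exists $z\in\CC$ with $\Psi_\varepsilon(z)=\lambda$, and this $z$ must be non-real since the real fibres lie above $[\lambda_\varepsilon,+\infty)$; likewise, any non-real complex number has at least one (necessarily non-real) preimage.

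For the sufficiency direction, I would put $D\in\mathrm{D}_d^{\CC}(\RR)$ in a real canonical form $D=PJP^{-1}$ with $P\in\mathrm{GL}_d(\RR)$ and $J$ block-diagonal over $\RR$, the blocks being $1\times 1$ scalars (for real eigenvalues of $D$) and $2\times 2$ blocks $\bigl(\begin{smallmatrix}a & -b\\ b & a\end{smallmatrix}\bigr)$ (for complex-conjugate pairs). I then lift each block separately: real eigenvalues $\lambda\geq\lambda_\varepsilon$ are lifted to a real scalar $\theta_0$ by the intermediate value theorem; each complex-conjugate pair $a\pm ib$ is lifted by picking $z\in\CC\setminus\RR$ with $\Psi_\varepsilon(z)=a+ib$ and using the real $2\times 2$ block with eigenvalues $z,\bar z$, which maps under $\Psi_\varepsilon$ to a real $2\times 2$ matrix similar to $\mathrm{diag}(a+ib,a-ib)$ and hence to $\bigl(\begin{smallmatrix}a & -b\\ b & a\end{smallmatrix}\bigr)$; finally, by the parity hypothesis, each real eigenvalue $\lambda<\lambda_\varepsilon$ occurs an even number of times and so can be paired off, each pair being lifted by a real $2\times 2$ block whose eigenvalues are a non-real $z,\bar z$ with $\Psi_\varepsilon(z)=\lambda$, giving image $\lambda I_2$ after conjugation. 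Assembling these blocks into a real $\theta_J$ and setting $\theta=P\theta_JP^{-1}$ yields a real matrix with $\Psi_\varepsilon(\theta)=D$.

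The necessity direction is a parity count. If $D=\Psi_\varepsilon(\theta)$ with $\theta\in\RR^{d\times d}$, the spectrum of $\theta$ is a union of real eigenvalues $\mu$ and complex-conjugate pairs $\{\mu,\bar\mu\}$. Real $\mu$'s contribute eigenvalues of $D$ in $[\lambda_\varepsilon,+\infty)$; a non-real pair contributes either a complex-conjugate pair of eigenvalues of $D$ (when $\Psi_\varepsilon(\mu)\notin\RR$) or two equal real eigenvalues $\Psi_\varepsilon(\mu)=\Psi_\varepsilon(\bar\mu)$ (when $\Psi_\varepsilon(\mu)\in\RR$). So any real eigenvalue of $D$ strictly below $\lambda_\varepsilon$ must arise only from the second sub-case and therefore appears with even algebraic multiplicity. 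The main technical obstacle is the surjectivity step used in the sufficiency argument, namely producing a non-real $z$ with $\Psi_\varepsilon(z)=\lambda$ for every $\lambda<\lambda_\varepsilon$, which is precisely what the entire-function localisation of \citet{runckel1969zeros} is invoked for; the block-diagonalisation and parity bookkeeping are then routine. The general (non-diagonalisable) case sketched in Appendix~\ref{app:th_representability} replaces the block diagonalisation above by a Jordan block decomposition of matrix functions \citep{gant}, the same scalar analysis of $\Psi_\varepsilon$ carrying over unchanged.
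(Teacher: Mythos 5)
Your argument is correct and follows essentially the same route as the paper's: reduce to the scalar entire function, show it is onto $\CC$ while its restriction to $\RR$ has range $[\lambda_\varepsilon,+\infty)$, then prove sufficiency by lifting a real block-diagonal canonical form of $D$ (pairing each real eigenvalue below $\lambda_\varepsilon$, which the hypothesis permits, into a $2\times2$ block whose real preimage has non-real conjugate eigenvalues) and necessity by a spectral-mapping parity count exploiting that $\theta$ is real. The only imprecision is the justification of surjectivity: Picard's little theorem allows one omitted value, and Runckel's result localizes the zeros of the specific function $G_\varepsilon$ rather than of $\Psi_\varepsilon-\lambda$ for arbitrary $\lambda\in\CC$, so neither gives surjectivity by itself; the step is closed exactly by your order-$\frac{1}{2}$ observation (an entire function of finite non-integer order has no exceptional value), which is the same mechanism as the paper's argument via Hadamard's factorization theorem applied to the even, order-one function $z\mapsto f_\varepsilon(z^2)$.
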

Theorem~\ref{th:representability} is illustrated in Figure \ref{fig:linear}. 
A consequence of this result is that the set of representable linear mappings is \textbf{strictly increasing} with $\epsilon$.
Another consequence is that one can learn \textbf{any} mapping up to scale using the ODE~\eqref{eq:second_order_lin}: if $D \in \mathrm{D}_{d}^{\CC}(\RR)$, there exists $\alpha_{\varepsilon} > 0$ such that for all $\lambda \in \mathrm{Sp}(\alpha_{\varepsilon}D)$, one has $\lambda > \lambda_{\varepsilon}$. Theorem~\ref{th:representability} shows that $\alpha_{\varepsilon}D$ is represented by a second-order model~\eqref{eq:second_order_lin}.

\section{Experiments}

We now demonstrate the applicability of Momentum ResNets through experiments. We used Pytorch and Nvidia Tesla V100 GPUs. 
\subsection{Point clouds separation}\label{sec:point_clouds}
\begin{figure}[H]
\includegraphics[width=\columnwidth]{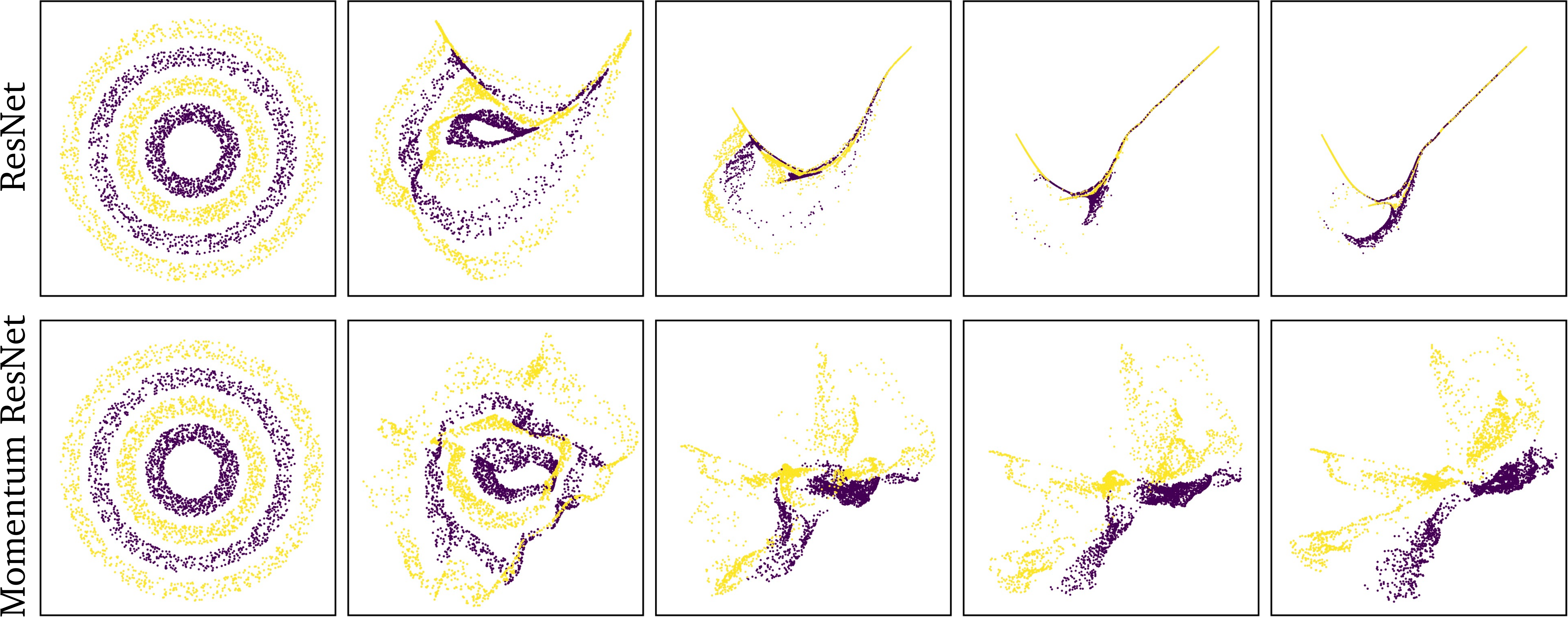} 
\caption{\textbf{Separation of four nested rings} using a ResNet (upper row) and a Momentum ResNet (lower row). From left to right, each figure represents the point clouds transformed at layer $3k$. The ResNet fails whereas the Momentum ResNet succeeds.}\label{fig:four_imbricated}
\vspace{-1em}
\end{figure}
We experimentally validate the representation capabilities of Momentum ResNets on a challenging synthetic classification task. 
As already noted \citep{dupont2019augmented}, neural ODEs ultimately fail to break apart nested rings.
We experimentally demonstrate the advantage of Momentum ResNets by separating $4$ nested rings ($2$ classes). We used the same structure for both models: $f(x,\theta) = W_2^{T} \tanh (W_1x +b)$ with $W_1$, $W_2\in \RR^{16 \times 2}$, $b\in \RR^{16}$, and a depth $15$. Evolution of the points as depth increases is shown in Figure~\ref{fig:four_imbricated}.  The fact that the trajectories corresponding to the ResNet panel don't cross is because, with this depth, the iterations approximate the solution of a first order ODE, for which trajectories cannot cross, due to the Picard-Lindelof theorem.

\subsection{Image experiments}\label{sec:experiments_images}

We also compare the accuracy of ResNets and Momentum ResNets on real data sets: CIFAR-10, CIFAR-100 \citep{CIFAR} and  ImageNet \citep{deng2009imagenet}.  We used existing ResNets architectures. We recall that Momentum ResNets can be used as a drop-in replacement and that it is sufficient to replace every residual building block with a momentum residual forward iteration. We set $\gamma = 0.9$ in the experiments. More details about the experimental setup are given in Appendix~\ref{app:experiment_details}.

\paragraph{Results on CIFAR-10 and CIFAR-100.} 

\begin{table}[H]
\vskip -0.15in
\centering
\caption{\label{tab:results_CIFAR}\textbf{Test accuracy for CIFAR} over 10 runs for each model}
\vskip 0.15in
\begin{adjustbox}{width=\columnwidth,center}
\begin{tabular}{|l|l|l|}
  \hline
  \textbf{Model} & \textbf{CIFAR-10} & \textbf{CIFAR-100} \\ \hline
  {Momentum ResNet, $v_0 = 0$} & {$95.1 \pm 0.13$} & {$76.39 \pm 0.18$} \\ \hline
  {Momentum ResNet, $v_0 = f(x_0)$} & $95.18 \pm 0.06$ & $76.38 \pm 0.42 $ \\ \hline
  {ResNet} & $95.15 \pm 0.12$ & $76.86 \pm 0.25$ \\ \hline
\end{tabular}
\end{adjustbox}

\end{table}

For these data sets,
we used a ResNet-101 \citep{he2015deep} and a Momentum ResNet-101 and compared the evolution of the test error and test loss. Two kinds of Momentum ResNets were used: one with an initial speed $v_0 = 0$ and the other one where the initial speed $v_0$ was learned: $v_0 = f(x_0)$. These experiments show that Momentum ResNets perform similarly to ResNets. Results are summarized in Table~\ref{tab:results_CIFAR}.

\begin{figure*}[ht]
 \centering
\includegraphics[width=0.61\textwidth]{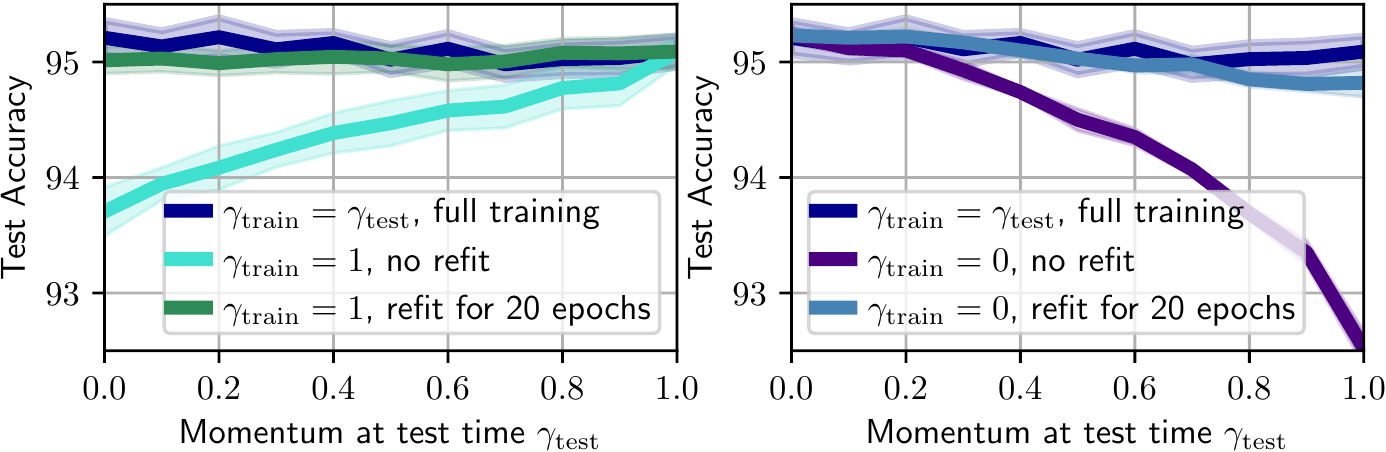} 
 \unskip\ \vrule\
 \includegraphics[width=0.37\textwidth]{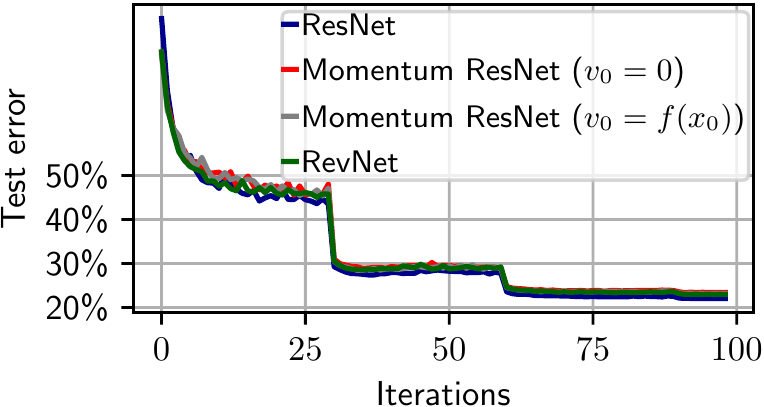} 
 \caption{Upper row: \textbf{Robustness of final accuracy w.r.t $\gamma$} when training Momentum ResNets 101 on CIFAR-10. We train the networks with a momentum $\gamma_{\mathrm{train}}$ and evaluate their accuracy with a different momentum $\gamma_\mathrm{test}$ at test time. We optionally refit the networks for $20$ epochs. We recall that $\gamma_{\mathrm{train}} = 0$ corresponds to a classical ResNet and $\gamma_{\mathrm{train}} = 1$ corresponds to a Momentum ResNet with optimal memory savings.
 Lower row: \textbf{Top-1 classification error on ImageNet (single crop)} for $4$ different residual architectures of depth 101 with the same number of parameters. %
 Final test accuracy is $22 \%$ for the ResNet-101 and $23 \%$ for the $3$ other invertible models.  In particular, our model achieve the same performance as a RevNet with the same number of parameters.}
\label{fig:beta_learning_curves}
\vspace{-1em}
 \end{figure*}
 
\paragraph{Effect of the momentum term $\gamma$.}

Theorem~\ref{th:representability} shows the effect of $\epsilon$ on the representable mappings for linear ODEs.
To experimentally validate the impact of $\gamma$, we train a Momentum ResNet-101 on CIFAR-10 for different values of the momentum at train time, $\gamma_{\mathrm{train}}$. We also evaluate Momentum ResNets trained with $\gamma_{\mathrm{train}} = 0$ and $\gamma_{\mathrm{train}} =1$ with no further training for several values of the momentum at test time, $\gamma_{\mathrm{test}}$. In this case, the test accuracy never decreases by more than $3 \%$.  We also refit for $20$ epochs Momentum ResNets trained with $\gamma_{\mathrm{train}} = 0$ and $\gamma_{\mathrm{train}} =1$. This is sufficient to obtain similar accuracy as models trained from scratch. Results are shown in Figure~\ref{fig:beta_learning_curves} (upper row). This indicates that the choice of $\gamma$ has a limited impact on accuracy.  In addition, learning the parameter $\gamma$ does not affect the accuracy of the model. Since it also breaks the method described in \ref{section:memory_cost}, we fix $\gamma$ in all the experiments.

\paragraph{Results on ImageNet.}
For this data set, we used a ResNet-101, a Momentum ResNet-101, and a RevNet-101. For the latter, we used the procedure from \citet{gomez2017reversible} and adjusted the depth of each layer for the model to have approximately the same number of parameters as the original ResNet-101. Evolution of test errors are shown in Figure~\ref{fig:beta_learning_curves} (lower row), where comparable performances are achieved.
\vspace{-0.5em}
\paragraph{Memory costs.}
We compare the memory (using a memory profiler) for performing one epoch as a function of the batch size for two datasets: ImageNet (depth of 152) and CIFAR-10 (depth of 1201). Results are shown in Figure~\ref{fig:memory_time} and illustrate how Momentum ResNets can benefit from increased batch size, especially for very deep models. We also show in Figure~\ref{fig:memory_time} the final test accuracy for a full training of Momentum ResNets on CIFAR-10 as a function of the memory used (directly linked to $\gamma$ (section \ref{section:memory_cost})).

 \begin{figure}[ht]
 \center  \includegraphics[width=1\columnwidth]{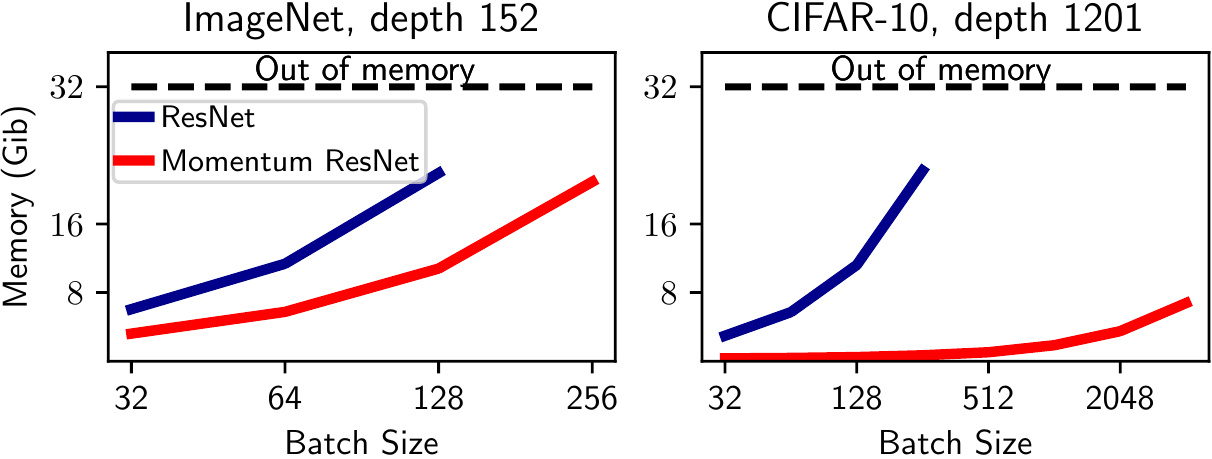}
 \vspace{0.5em}
  \unskip\ \hrule\
  \includegraphics[width=0.63\columnwidth]{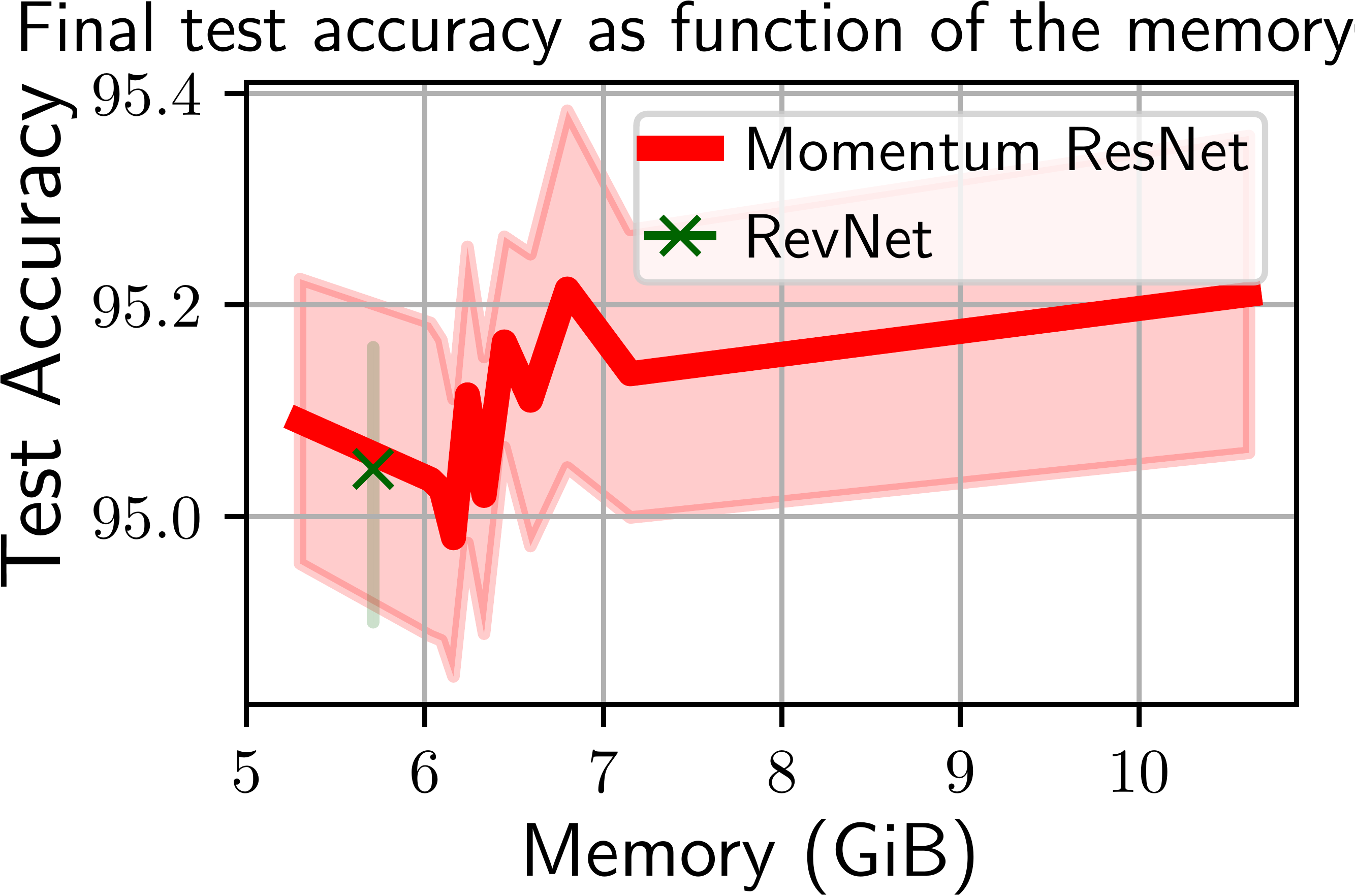}  \caption{Upper row: \textbf{Memory used} (using a profiler) for a ResNet and a Momentum ResNet on one training epoch, as a function of the batch size.  Lower row: \textbf{Final test accuracy} as a function of the memory used (per epoch) for training Momentum ResNets-101 on CIFAR-10.} 
 \label{fig:memory_time} 
 \end{figure}
\paragraph{Ability to perform pre-training and fine-tuning.}

It has been shown \citep{tajbakhsh2016convolutional} that in various medical imaging applications the use of a pre-trained model on ImageNet
with adapted fine-tuning outperformed a model trained from scratch. In order to easily obtain pre-trained Momentum ResNets for applications where memory could be a bottleneck, we transferred the learned parameters of a ResNet-152 pre-trained on ImageNet to a Momentum ResNet-152 with $\gamma=0.9$. In only $1$ epoch of additional training we reached a top-1 error of $26.5 \%$ and in $5$ additional epochs a top-1 error of $23.5 \%$. We then empirically compared the accuracy of these pre-trained models by fine-tuning them on new images: the \texttt{hymenoptera}\footnote{\href{url}{https://www.kaggle.com/ajayrana/hymenoptera-data}} data set. 

\begin{figure}[H]
    \begin{minipage}[c]{0.55\linewidth}
        \includegraphics[width=\textwidth]{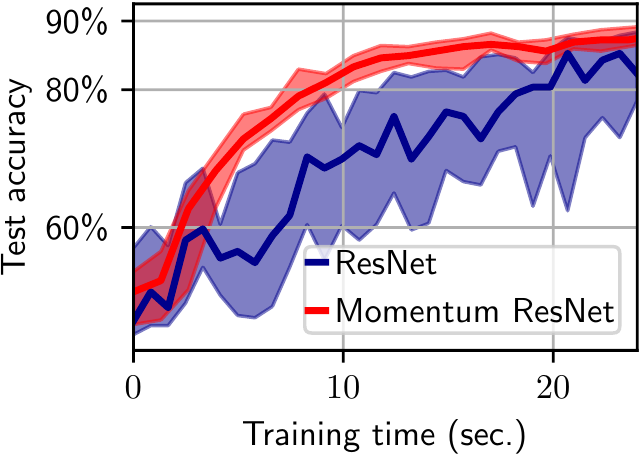}
    \end{minipage}\hfill
    \begin{minipage}[c]{0.42\linewidth}
        \vspace{-1em}
    \caption{\textbf{Accuracy} as a function of time on \texttt{hymenoptera} when fine-tuning a ResNet-152 and a Momentum ResNet-152 with batch sizes of $2$ and $4$, respectively, as permitted by memory. }\label{fig:fine_tuning}
    \end{minipage}
\end{figure}
 \vspace{-1em}
As a proof of concept, suppose we have a GPU with $3$ Go of RAM. The images have a resolution of $500 \times  500$ pixels so that the maximum batch size that can be taken for fine-tuning the ResNet-152 is $2$, against $4$ for the Momentum ResNet-152.  As suggested in \citet{tajbakhsh2016convolutional} (“if the distance between the source and target applications is significant, one may need to fine-tune the early layers as well”), we fine-tune the whole network in this proof of concept experiment.  In this setting the Momentum ResNet leads to faster convergence when fine-tuning, as shown in Figure~\ref{fig:fine_tuning}: Momentum ResNets can be twice as fast as ResNets to train when samples are so big that only few of them can be processed at a time.  In contrast, RevNets \citep{gomez2017reversible} cannot as easily be used for fine-tuning since, as shown in~\eqref{eq:revnet}, they require to train two distinct networks.

\paragraph{Continuous training.}

We also compare accuracy when using first-order ODE blocks \citep{chen2018neural} and second-order ones on CIFAR-10. In order to emphasize the influence of the ODE, we considered a neural architecture which down-sampled the input to have a certain number of channels, and then applied $10$ successive ODE blocks. Two types of blocks were considered: one corresponded to the first-order ODE~\eqref{eq:first_order_ODE} and the other one to the second-order ODE~\eqref{eq:pertu}. Training was based on the odeint function implemented by \citet{chen2018neural}.
Figure~\ref{fig:continuous_CIFAR_10} shows the final test accuracy for both models as a function of the number of channels used. As a baseline, we also include the final accuracy when there are no ODE blocks. We see that an ODE Net with momentum significantly outperforms an original ODE Net when the number of channels is small. Training took the same time for both models. 
\begin{figure}[ht]
    \begin{minipage}[c]{0.6\linewidth}
        \includegraphics[width=\textwidth]{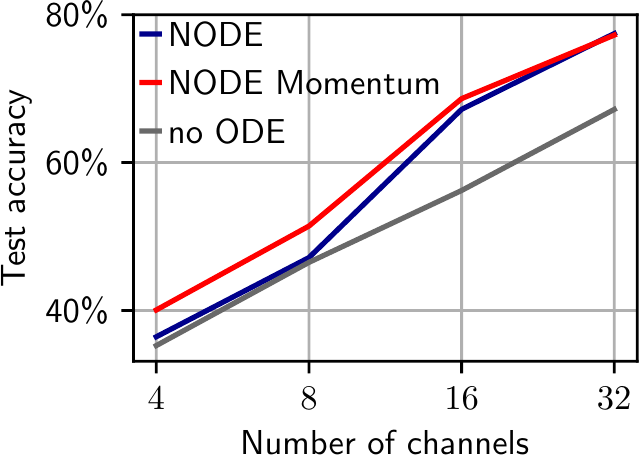}
    \end{minipage}\hfill
    \begin{minipage}[c]{0.35\linewidth}
        \vspace{-1em}
    \caption{\textbf{Accuracy after $120$ iterations on CIFAR-10} with or without momentum, when varying the number of channels.}\label{fig:continuous_CIFAR_10}
    \end{minipage}
    \vspace{-1em}
\end{figure}

\subsection{Learning to optimize}
\label{sec-numerics-lista}

We conclude by illustrating the usefulness of our Momentum ResNets in the \emph{learning to optimize} setting, where one tries to learn to minimize a function.
We consider the Learned-ISTA (LISTA) framework~\citep{gregor2010learning}. Given a matrix $D\in \RR^{d\times p}$, and a hyper-parameter $\lambda>0$, the goal is to perform the sparse coding of a vector $y\in \RR^d$, by finding $x\in\RR^p$ that minimizes the Lasso cost function $\mathcal{L}_y(x) \triangleq \frac12 \|y - Dx\|^2 + \lambda \|x\|_1$~\citep{tibshirani1996regression}.
In other words, we want to compute a mapping $y \mapsto \argmin_x \mathcal{L}_y(x)$.
The ISTA algorithm~\citep{daubechies2004iterative} solves the problem, starting from $x_0 = 0$, by iterating $x_{n+1} = \st(x_n - \eta D^{\top}(Dx_n - y), \eta \lambda)$, with $\eta >0$ a step-size.
Here, $\st$ is the soft-thresholding operator.
The idea of~\citet{gregor2010learning} is to view $L$ iterations of ISTA as the output of a neural network with $L$ layers that iterates $x_{n+1} = g(x_n, y, \theta_n)\triangleq \st(W^1_nx_n + W^2_ny, \eta\lambda)$, with parameters $\theta \triangleq (\theta_1, \dots, \theta_L)$ and $\theta_n \triangleq (W^1_n, W^2_n)$. 
We call $\Phi(y, \theta)$ the network function, which maps $y$ to the output $x_L$.
Importantly, this network can be seen as a residual network, with residual function $f(x, y, \theta) = g(x, y, \theta) - x$.
ISTA corresponds to fixed parameters between layers: $W^1_n = \mathrm{Id}_p - \eta D^{\top}D$ and $W^2_n = \eta D^ {\top}$, but these parameters can be learned to yield better performance.
We focus on an ``unsupervised'' learning setting, where we have some training examples $y^1, \dots, y^Q$, and use them to learn parameters $\theta$ that quickly minimize the Lasso function $\mathcal{L}$. In other words, the parameters $\theta$ are estimated by minimizing the cost function $\theta \mapsto \sum_{q=1}^Q\mathcal{L}_{y_q}(\Phi(y_q, \theta))$.
The performance of the network is then measured by computing the testing loss, that is the Lasso loss on some unseen testing examples.

We consider a Momentum ResNet and a RevNet variant of LISTA which use the residual function $f$. For the RevNet, the activations $x_n$ are first duplicated: the network has twice as many parameters at each layer.
The matrix $D$ is generated with i.i.d. Gaussian entries with  $p=32$, $d=16$, and its columns are then normalized to unit variance.
Training and testing samples $y$ are generated as normalized Gaussian i.i.d. entries. More details on the experimental setup are added in Appendix~\ref{app:experiment_details}. 
The next Figure~\ref{fig:lista} shows the test loss of the different methods, when the depth of the networks varies.
\begin{figure}[H]
\centering
\includegraphics[width=0.8\columnwidth]{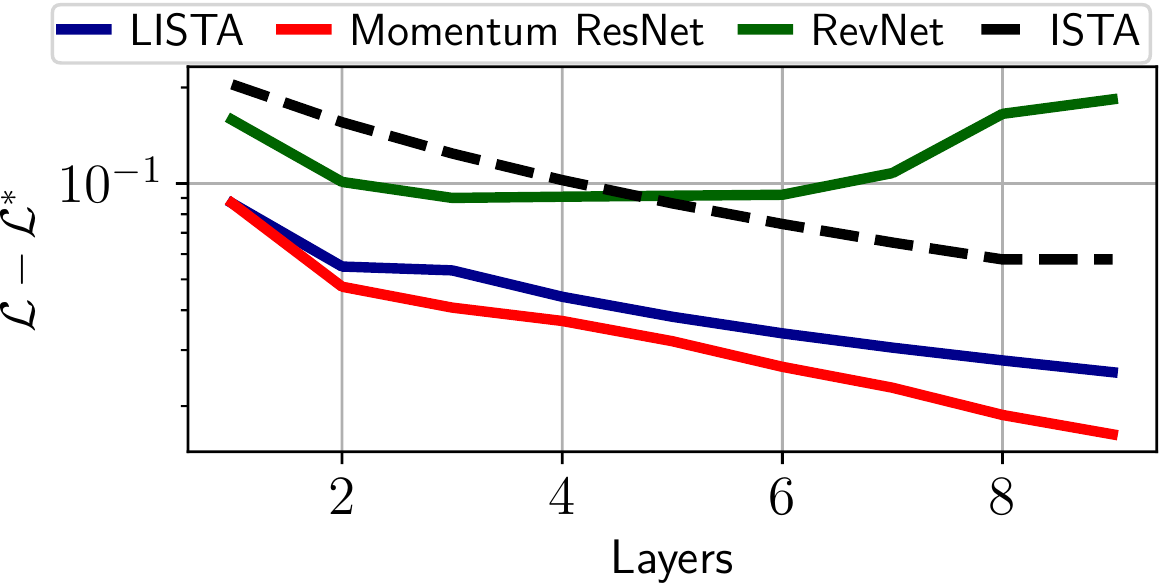} 
\caption{\textbf{Evolution of the test loss} for different models as a function of depth in the Learned-ISTA (LISTA) framework.}
\label{fig:lista}
\end{figure}
As predicted by Proposition~\ref{prop:revnet_fix}, the RevNet architecture fails on this task: it cannot have converging iterations, which is exactly what is expected here.
In contrast, the Momentum ResNet works well, and even outperforms the LISTA baseline.
This is not surprising: it is known that momentum can accelerate convergence of first order optimization methods.
\vspace{-1em}
\section*{Conclusion}

This paper introduces Momentum ResNets, new invertible residual neural networks operating with a significantly reduced memory footprint compared to ResNets. In sharp contrast with existing invertible architectures, they are made possible by a simple modification of the ResNet forward rule. This simplicity offers both theoretical advantages (better representation capabilities, tractable analysis of linear dynamics) and practical ones (drop-in replacement, speed and memory improvements for model fine-tuning). Momentum ResNets interpolate between ResNets ($\gamma=0$) and RevNets ($\gamma=1$), and are a natural second-order extension of neural ODEs. %
As such, they can capture non-homeomorphic dynamics and converging iterations. As shown in this paper, the latter is not possible with existing invertible residual networks, although crucial in the learning to optimize setting.
\vspace{-2em}
\section*{Acknowledgments}

This work was granted access to the HPC resources of IDRIS under the allocation 2020-[AD011012073] made by GENCI. This work was supported in part by the French government under management of Agence Nationale de la Recherche as part of the “Investissements d’avenir” program, reference ANR19-P3IA-0001 (PRAIRIE 3IA Institute). This work was
supported in part by the European Research Council (ERC project NORIA).  The authors would like to thank David Duvenaud and Dougal Maclaurin for their helpful feedbacks.  M. S. thanks Pierre Rizkallah and Pierre Roussillon for fruitful discussions. 
\bibliography{example_paper}
\bibliographystyle{icml2020}

\onecolumn

\icmltitle{Appendix}
\appendix

In Section~\ref{app:proofs} we give the proofs of all the Propositions and the Theorem. In Section~\ref{app:additional_results} we give other theoretical results to validate statements made in the paper. Section~\ref{app:memory_savings} presents the algorithm from \citet{10.5555/3045118.3045343}.  Section~\ref{app:experiment_details} gives details for the experiments in the paper. We derive the formula for backpropagation in Momentum ResNets in Section~\ref{app:backprop_mom_nets}. Finally, we present additional figures in Section~\ref{app:figures}.

\section{Proofs}\label{app:proofs}

\paragraph{Notations}
\begin{itemize}
    \item $C_0^{\infty}([0,1],\RR^d)$ is the set of  infinitely differentiable functions from $[0,1]$ to $\RR^d$ with value $0$ in $0$.
    \item If $f : U \times V \to W$ is a function, we denote by $\partial_u f$, when it exists, the partial derivative of $f$ with respect to $u \in U$.
    \item For a matrix $A \in \RR^{d\times d}$, we denote by $(\lambda - z)^a$ the Jordan block of size $a \in \NN$ associated to the eigenvalue $z \in \CC$ .
\end{itemize}

\setcounter{subsection}{-1}

\subsection{Instability of fixed points -- Proof of Proposition~\ref{prop:revnet_fix}}
\begin{proof}
Since $(x^*, v^*)$ is a fixed point of the RevNet iteration, we have
$$
\phi(x^*) = 0
$$
$$
\psi(v^*)=0
$$

Then, a first order expansion, writing $x = x^*+\varepsilon$ and $v= v^*+\delta$ gives at order one
\begin{align}
    \Psi(v, x) = \left(v^* + \delta + A\varepsilon, x^* + \varepsilon + B(\delta + A\varepsilon)\right)
\end{align}

We therefore obtain at order one
$$
\Psi(v, x) = \Psi(v^*, x^*) + J(A, B)
\begin{pmatrix}
\delta\\
\varepsilon
\end{pmatrix}
$$
which shows that $J(A, B)$ is indeed the Jacobian of $\Psi$ at $(v^*, x^*)$.
We now turn to a study of the spectrum of $J(A, B)$.
We let $\lambda \in \CC$ an eigenvalue of $J(A, B)$, and vectors $u\in\CC^d$, $w\in\CC^d$ such that $(u, w)$ is the corresponding eigenvector, and study the eigenvalue equation
$$
J(A, B)
\begin{pmatrix}
u\\
w
\end{pmatrix}
= \lambda
\begin{pmatrix}
u\\
w
\end{pmatrix}
$$
which gives the two equations
\begin{equation}
    \label{eq:app:eig_eq1}
    u + Aw = \lambda u
\end{equation}
\begin{equation}
\label{eq:app:eig_eq2}    
w + Bu + BAw = \lambda w
\end{equation}

We start by showing that $\lambda \neq 1$ by contradiction.
Indeed, if $\lambda = 1$, then \eqref{eq:app:eig_eq1} gives $Aw = 0$, which implies $w  =0$ since $A$ is invertible. Then, \eqref{eq:app:eig_eq2} gives $Bu=0$, which also implies $u=0$. This contradicts the fact that $(u, v)$ is an eigenvector (which is non-zero by definition).

Then, the first equation~\eqref{eq:app:eig_eq1} gives $Aw = (\lambda - 1)u$, and multiplying \eqref{eq:app:eig_eq2} by $A$ on the left gives
\begin{equation}
    \label{eq:app:eig_eq3}
    \lambda  ABu = (\lambda - 1)^2 u
\end{equation}

We also cannot have $\lambda =0$, since it would imply $u=0$.
Then, dividing \eqref{eq:app:eig_eq3} by $\lambda$ shows that $\frac{(\lambda - 1)^2}{\lambda}$ is an eigenvalue of $AB$.

Next, we let $\mu\neq 0$ the eigenvalue of $AB$ such that $\mu = \frac{(\lambda - 1)^2}{\lambda}$.
The equation can be rewritten as the second order equation 
$$
\lambda^2 -(2 +\mu)\lambda + 1 = 0
$$

This equation has two solutions $\lambda_1(\mu)$, $\lambda_2(\mu)$, and since the constant term is $1$, we have $\lambda_1(\mu)\lambda_2(\mu) = 1$. Taking modulus, we get 
$|\lambda_1(\mu)||\lambda_2(\mu)|= 1$, which shows that necessarily, either $|\lambda_1(\mu)|\geq 1$ or $|\lambda_1(\mu)|\geq 1$.

Now, the previous reasoning is only a necessary condition on the eigenvalues, but we can now prove the advertised result by going backwards: we let $\mu \neq 0$ an eigenvalue of $AB$, and $u\in\CC^d$ the associated eigenvector. We consider $\lambda$ a solution of $\lambda^2 -(2 +\mu)\lambda + 1 = 0$ such that $|\lambda | \geq 1$ and $\lambda \neq 1$. Then, we consider $w = (\lambda - 1)A^{-1}u$. We just have to verify that $(u, v)$ is an eigenvector of $J(A, B)$. By construction, \eqref{eq:app:eig_eq1} holds.
Next, we have 
$$
A(w + Bu + BAw) = (\lambda - 1)u + ABu + (\lambda -1)ABu=(\lambda - 1) u + \lambda ABu
$$
Leveraging the fact that $u$ is an eigenvector of $AB$, we have $\lambda ABu = \lambda \mu u$, and finally:

$$
A(w + Bu + BAw) = (\lambda - 1 + \lambda \mu)u = \lambda (\lambda-1)u = \lambda Aw
$$
Which recovers exactly \eqref{eq:app:eig_eq2}: $\lambda$ is indeed an eigenvalue of $J(A, B)$.
\end{proof}

\subsection{Momentum ResNets in the limit $\varepsilon \xrightarrow[]{} 0$ -- Proof of Proposition~\ref{prop:eps_0}}\label{app:prop_eps_0}

\begin{proof}
We take $T=1$ without loss of generality.
We are going to use the implicit function theorem. 
Note that $x_{\varepsilon}$ is solution of ~\eqref{eq:pertu} if and only if $(x_{\varepsilon}, v_{\varepsilon} = \dot{x_{\varepsilon}})$ is solution of
\begin{equation*}
\begin{split}
\begin{cases}
\dot{x} &= v, \quad x(0) = x_0 \\
\varepsilon \dot{v} &= f(x,\theta) - v, \quad v(0) = v_0.
\end{cases}
\end{split}
\end{equation*}
Consider for $u = (x,v) \in (x_0,v_0) + C_0^{\infty}([0,1],\RR^d)^2$ 
$$\Psi(u,\varepsilon) = \left(x_0 - x +  \int_0^{t}v , \int_0^{t}(f(x,\theta) - v) - \varepsilon v + \varepsilon v_0 \right),$$
so that $x_{\varepsilon}$ is solution of ~\eqref{eq:pertu} if and only if $u_{\varepsilon} = (x_{\varepsilon}, v_{\varepsilon} = \dot{x_{\varepsilon}})$ satisfies $\Psi(u_{\varepsilon},\varepsilon) = 0.$ 
Let $u^{*} = (x^*,\dot{x^*})$.
One has $\Psi(u^{*},0) = 0.$
$\Psi$ is differentiable everywhere, and at $(u^{*},0)$ we have
$$
\partial_{u}\Psi (u^*,0)(x,v) = \left( (\int_0^{t}v) - x, \int_0^{t}(\partial_x f(x*,\theta).x - v)\right).
$$
$\partial_{u}\Psi (u^*,0)$ is continuous, and it is invertible with continuous inverse because it is linear and continuous, and because $\partial_{u}\Psi (u^*,0)(x,v) = 0$ if and only if
$$
\begin{cases}
\forall t \in [0,1], x(t) = \int_0^{t} v \\
\forall t \in [0,1], v(t) =\partial_x f(x^*(t),\theta(t)).x(t)
\end{cases}
$$
which is equivalent to 
\begin{equation*}
\begin{cases}
\dot{x}  = \partial f(x^*,\theta).x\\
x(0) = 0 \\
v  = \dot{x}, \\
\end{cases}
\end{equation*}
which is equivalent, because this equation is linear to $(x,v) = (0,0)$.
Using the implicit function theorem, we know that there exists two neighbourhoods $U \subset \RR$ and $V \subset (x_0,v_0) + C_0^{\infty}([0,1],\RR^d)^2$ of $0$ and $u^*$ and a continuous function $\zeta : U \to V$ such that
$$
\forall (u,\varepsilon) \in U \times V, \Psi(u,\varepsilon) = 0 \Leftrightarrow u = \zeta (\varepsilon)
$$
This in particular ensures that $x_{\epsilon}$ converges uniformly to $x^*$ as $\epsilon$ goes to $0$
\end{proof}

\subsection{Momentum ResNets are more general than neural ODEs -- Proof of Proposition~\ref{prop:bigger_set}}
\begin{proof}
If $x$ satisfies~\eqref{eq:first_order_ODE} we get by derivation that
$$
\ddot{x} = \partial_xf(x,\theta)f(x,\theta) + \partial_\theta f(x,\theta)\dot{\theta}
$$
Then, if we define $\hat{f}(x,\theta) = \varepsilon[\partial_xf(x,\theta)f(x,\theta) + \partial_\theta f(x,\theta)\dot{\theta}] + f(x,\theta)$, we get that $x$ is also solution of the second-order model
$\varepsilon \ddot{x} + \dot{x} = \hat{f}(x,\theta)$
with 
$(x(0),\dot{x}(0)) = (x_0,f(x_0,\theta_0))$.
\end{proof}

\subsection{Solution of~\eqref{eq:second_order_lin} -- Proof of Proposition~\ref{prop:sol_second_order}}

\eqref{eq:second_order_lin} writes

\begin{equation*}
\begin{split}
\begin{cases}
\dot{x} &= v, \quad x(0) = x_0 \\
\dot{v} &= \frac{\theta x - v}{\varepsilon}, \quad v(0) = 0.
\end{cases}
\end{split}
\end{equation*}

For which the solution at time $t$ writes

$$
\begin{pmatrix} 
x(t) \\ v(t) 
\end{pmatrix} = \exp{\begin{pmatrix}
0 & \mathrm{Id}_dt \\
\frac{\theta t}{\varepsilon} & - \frac{ \mathrm{Id}_d t}{\varepsilon} 
\end{pmatrix}} . \begin{pmatrix} 
x_0 \\ 0
\end{pmatrix}.
$$

The calculation of this exponential gives
$$
x(t) =  e^{-\frac{t}{2\varepsilon}} \left(\sum_{n=0}^{+\infty} \frac{1}{(2n)!}(\frac{\theta}{\varepsilon} + \frac{\mathrm{Id}_d}{4 \varepsilon^2})^nt^{2n} + \sum_{n=0}^{+\infty} \frac{1}{2\varepsilon(2n+1)!}(\frac{\theta}{\varepsilon} + \frac{\mathrm{Id}_d}{4 \varepsilon^2})^nt^{2n+1}\right)x_0.
$$
Note that it can be checked directly that this expression satisfies~\eqref{eq:second_order_lin} by derivations.
At time $1$ this effectively gives $x(1) = \Psi_{\varepsilon}(\theta)x_0$.

\subsection{Representable mappings for a Momentum ResNet with linear residual functions -- Proof of Theorem~\ref{th:representability}}\label{app:th_representability}

In what follows, we denote by $f_{\varepsilon}$ the function of matrices defined by $$f_{\varepsilon}(\theta) = \Psi_{\varepsilon}(\varepsilon \theta - \frac{I}{4\varepsilon}) = e^{-\frac{1}{2\varepsilon}} \sum_{n=0}^{+\infty} \left(\frac{1}{(2n)!} + \frac{1}{2\varepsilon(2n+1)!}\right) \theta ^n.$$

Because $\Psi_{\varepsilon}(\RR^{d\times d}) = f_{\varepsilon}(\RR^{d\times d})$, we choose to work on $f_{\varepsilon}$.

We first need to prove that $f_{\varepsilon}$ is surjective on $\CC$.

\subsubsection{Surjectivity on $\CC$ of $f_{\varepsilon}$}
\begin{lemma}[Surjectivity of $f_{\varepsilon}$]\label{lemma:surjectivity}
For $\varepsilon>0$, $f_{\varepsilon}$ is surjective on $\CC$. 
\end{lemma}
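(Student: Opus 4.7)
The plan is to reduce the statement to the surjectivity on $\CC$ of an \emph{even} entire function of exponential type, and then to rule out a potential Picard exceptional value by exploiting that symmetry.

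First I would put $f_\varepsilon$ in closed form. Grouping the two subseries in the definition of $f_\varepsilon$ and setting $\delta^2=\theta$ one recognises the standard $\cosh/\sinh$ expansions, so that
\begin{equation*}
f_\varepsilon(\delta^2) \;=\; e^{-1/(2\varepsilon)}\, h(\delta),\qquad h(\delta) \;\triangleq\; \cosh\delta + \frac{\sinh\delta}{2\varepsilon\,\delta}.
\end{equation*}
Because $\delta\mapsto\delta^2$ is surjective on $\CC$, it suffices to prove that $h(\CC)=\CC$. Two features of $h$ are crucial. First, $h$ is entire (the apparent singularity at $\delta=0$ is removable, with value $1+\tfrac{1}{2\varepsilon}$) and \emph{even} in $\delta$, since both $\cosh\delta$ and $\sinh\delta/\delta$ are even. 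Second, the elementary bounds $|\cosh\delta|,|\sinh\delta|\le e^{|\delta|}$ give $|h(\delta)|\le (1+\tfrac{1}{2\varepsilon})\,e^{|\delta|}$ for $|\delta|\ge 1$, so $h$ has order at most one.

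Suppose, for contradiction, that $h$ omits some value $w_0\in\CC$. Then $h-w_0$ is entire and nowhere vanishing, so there is an entire function $g$ with $h(\delta)-w_0 = e^{g(\delta)}$. Because $h-w_0$ inherits order $\le 1$ from $h$, a standard consequence of Hadamard factorisation (or of the Borel--Carath\'eodory inequality applied to $g$) forces $g$ to be a polynomial of degree at most one, say $g(\delta)=a\delta+b$, so that $h(\delta)=w_0+e^{a\delta+b}$. The evenness then yields $e^{a\delta+b}=e^{-a\delta+b}$ for all $\delta$, i.e.\ $e^{2a\delta}\equiv 1$, which forces $a=0$; but then $h\equiv w_0+e^b$ is constant, contradicting $h(0)=1+\tfrac{1}{2\varepsilon}$ together with $h(\delta)\sim e^{\delta}/2$ as $\delta\to+\infty$. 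Hence $h$ omits no value, and so $f_\varepsilon$ is surjective on $\CC$.

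The main obstacle is precisely this last step. Picard's little theorem (or the order-one Hadamard structure on its own) only rules out all but at most one value; it is the combination of the $\delta\leftrightarrow -\delta$ symmetry with the exponential-type growth that upgrades ``at most one omitted value'' to ``none'', and this is the place where the specific form of the solution to the second-order linear ODE really enters the argument.
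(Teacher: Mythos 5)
Your proof is correct and follows essentially the same route as the paper's: reduce to the even entire function $\cosh\delta + \sinh\delta/(2\varepsilon\delta)$ via the surjectivity of $\delta\mapsto\delta^2$, apply Hadamard factorisation to the (order-one, zero-free) difference with a putative omitted value to get the form $e^{a\delta+b}$, and use evenness to force $a=0$ and hence a contradiction with non-constancy. Nothing further is needed.
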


\begin{proof}
Consider
\begin{align*}
 F_{\varepsilon} \colon \CC & \longrightarrow \CC\\
z &\longmapsto e^{-\frac{1}{2 \varepsilon}}(\cosh(z) + \frac{1}{2\varepsilon z} \sinh(z)).
\end{align*}
For $z \in \CC$, we have $f_{\varepsilon}(z^2) = F_{\varepsilon}(z)$, and because $z \mapsto z^2$ is surjective on $\CC$, it is sufficient to prove that $F_{\varepsilon}$ is surjective on $\CC$.
Suppose by contradiction that there exists $w \in \CC$ such that $\forall z \in \CC$, $\exp{(\frac{1}{2 \varepsilon}})F_{\varepsilon}(z) \neq w$.
Then $\exp{(\frac{1}{2 \varepsilon}})F_{\varepsilon} -w $ is an entire function  \cite{levin1996lectures} of order 1 with no zeros. Using Hadamard's factorization theorem \cite{conway2012functions}, this implies that there exists $a, b \in \CC$ such that $\forall z \in \CC$, 
$$
\cosh(z) + \frac{\sinh(z)}{2\varepsilon z} - w = \exp{(az +b).}
$$
However, since $F_{\varepsilon}$ is an even function one has that $\forall z \in \CC$ 
$$
\exp{(az +b)} = \exp{(-az +b)}
$$
so that $\forall z \in \CC$, $2az \in 2i\pi\ZZ$. Necessarily, $a = 0$, which is absurd because $F_{\varepsilon}$ is not constant.\\
\end{proof}

We first prove Theorem~\ref{th:representability} in the diagonalizable case. 
\subsubsection{Theorem~\ref{th:representability} in the diagonalizable case}

\begin{proof}

\textbf{Necessity}
Suppose that $D$ can be represented by a second-order model~\eqref{eq:second_order_lin}. This means that there exists a real matrix $X$ such that $D = f_{\varepsilon}(X)$
with $X$ real and
$$f_{\varepsilon}(X)=  e^{-\frac{1}{2\varepsilon}} (\sum_{n=0}^{+\infty} a_n^{\varepsilon} X^n)$$
with 
$$
a_n^{\varepsilon} = \frac{1}{(2n)!} + \frac{1}{2\varepsilon(2n+1)!}.
$$
$X$ commutes with $D$ so that there exists $P \in \mathrm{GL}_d(\CC)$ such that $P^{-1}DP$ is diagonal and $P^{-1}XP$ is triangular. Because $f_{\varepsilon}(P^{-1}XP) = P^{-1}DP$, we have that $\forall \lambda \in \mathrm{Sp}(D)$,  there exists $z \in \mathrm{Sp}(X)$ such that $\lambda = f_{\varepsilon}(z)$. Because $\lambda < \lambda_{\varepsilon}$, necessarily, $z \in \CC - \RR$. In addition, $\lambda = f_{\varepsilon}(z) = \bar{\lambda} = f_{\varepsilon}(\bar{z})$. Because $X$ is real, each $z \in \mathrm{Sp}(X)$ must be associated with $\bar{z}$ in $P^{-1}XP$. Thus, $\lambda$ appears in pairs in $P^{-1}DP$.
\paragraph{Sufficiency}
Now, suppose that $\forall \lambda \in \mathrm{Sp}(D)$ with $\lambda < \lambda_{\varepsilon}$,  $\lambda$ is of even multiplicity order. We are going to exhibit a $X$ real such that $D = f_{\varepsilon}(X)$. Thanks to Lemma~\ref{lemma:surjectivity}, we have that $f_{\varepsilon}$ is surjective. Let $\lambda \in \mathrm{Sp}(D)$.
\begin{itemize}
    \item If $\lambda \in \RR$ and $\lambda < \lambda_{\varepsilon}$ or $\lambda \in \CC - \RR$ then there exists $z \in \CC - \RR$ by Lemma~\ref{lemma:surjectivity} such that $\lambda = f_{\varepsilon}(z)$. 
    \item If $\lambda \in \RR$ and $\lambda \geq \lambda_{\varepsilon}$, then because $f_{\varepsilon}$ is continuous and goes to infinity when $x \in \RR$ goes to infinity, there exists $x \in \RR$ such that $\lambda = f_{\varepsilon}(x)$. 
  
\end{itemize}

In addition, there exist $(\alpha_1, ..., \alpha_k) \in (\CC - \RR)^k \cup [-\infty, \lambda_{\varepsilon}[ ^k$, $(\beta_1,...,\beta_p) \in [\lambda_{\varepsilon},+\infty]^p$ such that 
$$
D = Q^{-1} \Delta Q,
$$
with $Q\in \mathrm{GL}_d{(\RR)}$,
and 
\begin{equation*}
\Delta=
\begin{pmatrix}
P_1^{-1}D_{\alpha_1} P_1 & 0_2 & \cdots & \cdots & \cdots & 0_2 \\
0_2 & \ddots & \cdots & \cdots & \cdots  & 0_2 \\
 \vdots & \vdots & P_k^{-1} D_{\alpha_k} P_k & 0_2 &  \cdots & 0_2 \\
0 & \cdots & \cdots & \beta_1 & \cdots   & 0 \\
0 & \cdots & \cdots & 0 & \ddots   & 0 \\
0 & \cdots & \cdots & \cdots & \cdots & \beta_p
\end{pmatrix} \in \RR^{d \times d}
\end{equation*}
with $P_j \in GL_2(\CC)$ and $D_{\alpha_j} = \begin{pmatrix}
\alpha_j & 0\\
0 & \bar{\alpha_j}
\end{pmatrix}$.
\\

Let $(z_1, ..., z_k) \in (\CC - \RR)^k $ and $(x_1,...,x_p) \in \RR^p$ be such that $f_{\varepsilon}(z_j) = \alpha_j$ and $f_{\varepsilon}(x_j) = \beta_j$. For $1 \leq j \leq k$, one has $P_j^{-1} D_{z_j} P_j \in \RR^{2\times2}$. Indeed, writing $\alpha_j = a_j + i b_j$ with $a_j, b_j \in \RR$, the fact that $P_j^{-1} D_{\alpha_j} P_j \in \RR^{2\times2}$ implies that $i \begin{pmatrix}
1 & 0\\
0 & -1
\end{pmatrix} \in i \RR^{2\times2}$. Writing $z_j = u_j + i v_j $ with $u_j, v_j \in \RR$, we get that $P_j^{-1} D_{z_j} P_j \in \RR^{2\times2}$.
Then
\begin{equation*}
X= Q 
\begin{pmatrix}
P_1^{-1}D_{z_1} P_1 & 0_2 & \cdots & \cdots & \cdots & 0_2 \\
0_2 & \ddots & \cdots & \cdots & \cdots  & 0_2 \\
 \vdots & \vdots & P_k^{-1} D_{z_k} P_k & 0_2 &  \cdots & 0_2 \\
0 & \cdots & \cdots & x_1 & \cdots   & 0 \\
0 & \cdots & \cdots & 0 & \ddots   & 0 \\
0 & \cdots & \cdots & \cdots & \cdots & x_p
\end{pmatrix} Q^{-1} \in \RR^{d \times d}
\end{equation*}
is such that $f_{\varepsilon}(X) = D$, and $D$ is represented by a second-order model~\eqref{eq:second_order_lin}.
\end{proof} 

We now state and demonstrate the general version of Theorem~\ref{th:representability}.

First, we need to demonstrate properties of the complex derivatives of the entire function $f_{\varepsilon}$.

\subsubsection{The entire function $f_{\varepsilon}$ has a derivative with no-zeros on $\CC-\RR$.}

\begin{lemma}[On the zeros of $f_{\varepsilon}'$]\label{lemma:zeros_derivative}
$\forall z \in \CC-\RR$ we have $f_{\varepsilon}'(z) \neq 0$.
\end{lemma}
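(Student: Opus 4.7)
The plan is to reduce the lemma to the classical Laguerre--P\'olya theory: I will show that $f_\varepsilon$ is a real entire function of order at most $1/2$ whose zeros are all real, and then invoke the fact that the Laguerre--P\'olya class of such functions is closed under differentiation. The derivative $f_\varepsilon'$ then automatically inherits the property of having only real zeros, which is exactly the claim.

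The heart of the argument is to localize the zeros of $f_\varepsilon$ itself. For $w \neq 0$, picking $z$ with $z^2 = w$ and using the identity $f_\varepsilon(w) = F_\varepsilon(z)$ already recorded in the proof of Lemma~\ref{lemma:surjectivity}, the equation $f_\varepsilon(w) = 0$ becomes $2\varepsilon z\cosh z + \sinh z = 0$ after multiplication by $2\varepsilon z$. Since $\cosh$ and $\sinh$ never vanish simultaneously, this is equivalent to $\tanh z = -2\varepsilon z$. Writing $z = x + iy$ and using
\[
\tanh(x+iy) = \frac{\sinh(2x) + i\sin(2y)}{\cosh(2x) + \cos(2y)},
\]
I would compare real parts to obtain $\sinh(2x) = -2\varepsilon x\bigl(\cosh(2x) + \cos(2y)\bigr)$. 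For $x \neq 0$ the denominator $\cosh(2x) + \cos(2y)$ is strictly positive, so the right-hand side has the opposite sign from $\sinh(2x)$; this forces $x = 0$, and therefore every zero satisfies $w = z^2 \in \RR_{\leq 0}$.

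Next I would estimate the growth of $f_\varepsilon$. Bounding coefficients termwise gives $|f_\varepsilon(w)| \leq e^{-1/(2\varepsilon)}\bigl(\cosh\sqrt{|w|} + \sinh(\sqrt{|w|})/(2\varepsilon\sqrt{|w|})\bigr)$, so $f_\varepsilon$ has order at most $1/2$. Since the order is strictly less than one, $f_\varepsilon$ has genus zero and its Hadamard factorization reduces to $f_\varepsilon(w) = f_\varepsilon(0)\prod_n(1 - w/w_n)$ with all $w_n$ real by the previous step; combined with the reality of its Taylor coefficients, this places $f_\varepsilon$ in the Laguerre--P\'olya class. Applying the classical closure of this class under differentiation (equivalently, Laguerre's theorem for a real entire function of genus $\leq 1$ with only real zeros) then yields that $f_\varepsilon'$ has only real zeros, which is the claim.

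The only genuinely delicate step will be the sign analysis: the fact that $\tanh z = -2\varepsilon z$ admits no complex solution relies crucially on the positivity of $\varepsilon$. Everything else---the order estimate and the stability of Laguerre--P\'olya under differentiation---is classical entire-function theory that applies immediately once the reality of the zeros of $f_\varepsilon$ has been established.
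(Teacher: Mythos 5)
Your proof is correct, and its skeleton is the same as the paper's: establish that the relevant entire function lies in the Laguerre--P\'olya class and then invoke closure of that class under differentiation. The genuine difference is in how the key input --- the reality of the zeros of $f_\varepsilon$ --- is obtained. The paper works with $G_\varepsilon(z)=f_\varepsilon(-z^2)$, cites \citet{runckel1969zeros} for the reality of its zeros, and then verifies the three-part criterion for Laguerre--P\'olya membership (real zeros, $\sum |z_n|^{-2}<\infty$ via the asymptotics $z_n\sim n\pi+\pi/2$, order $1$), finally transferring back through $G_\varepsilon'(z)=-2zf_\varepsilon'(-z^2)$. You instead stay with $f_\varepsilon$ itself and prove the reality of its zeros by hand: reducing $f_\varepsilon(w)=0$ to $\tanh z=-2\varepsilon z$ and comparing real parts via $\operatorname{Re}\tanh(x+iy)=\sinh(2x)/(\cosh(2x)+\cos(2y))$ forces $x=0$, hence $w=z^2\le 0$; this sign argument is sound (for $x\neq 0$ the denominator is $>0$, so the two sides of $\sinh(2x)=-2\varepsilon x(\cosh(2x)+\cos(2y))$ have opposite signs), and you correctly dispose of the degenerate case $\cosh z=0$. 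Moreover, your order bound $\rho\le 1/2$ makes the Hadamard factorization genus~$0$, so the summability condition on the zeros and the absence of an exponential factor come for free, slightly streamlining the Laguerre--P\'olya membership check. The net effect is a self-contained, more elementary argument that removes the external dependence on Runckel's theorem, at the cost of a short explicit computation; both routes are valid.
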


\begin{proof}
One has $$G_{\varepsilon}(z) =  e^{-\frac{1}{2 \varepsilon}}(\cos(z) + \frac{1}{2\varepsilon z} \sin(z)) = f_{\varepsilon}(-z^2)$$
so that $G_{\varepsilon}'(z) = -2zf_{\varepsilon}'(-z^2)$ and it is sufficient to prove that the zeros of $G_{\varepsilon}'$ are all real. 

We first show that $G_{\varepsilon}$ belongs to the Laguerre-Pólya class \cite{craven2002iterated}. The Laguerre-Pólya class is the set of entire functions that are the uniform limits on compact sets of $\CC$ of polynomials with only real zeros. To show that $G_{\varepsilon}$ belongs to the Laguerre-Pólya class, it is sufficient to show \citep[p. 22]{dryanov1999approximation} that:
\begin{itemize}
    \item The zeros of $G_{\varepsilon}$ are all real.
    \item If $(z_n)_{n \in \NN}$ denotes the sequence of real zeros of $G_{\varepsilon}$, one has $\sum \frac{1}{|z_n|^2} < \infty$.
    \item $G_{\varepsilon}$ is of order $1$.
\end{itemize}
First, the zeros of $G_{\varepsilon}$ are all real, as demonstrated in \citet{runckel1969zeros}. Second, if $(z_n)_{n \in \NN}$ denotes the sequence of real zeros of $G_{\varepsilon}$, one has $z_n \sim n \pi + \frac{\pi}{2}$ as $n\xrightarrow[]{} \infty$, so that $\sum \frac{1}{|z_n|^2} < \infty$. Third, $G_{\varepsilon}$ is of order $1$.
Thus, we have that $G_{\varepsilon}$ is indeed in the Laguerre-Pólya class.

This class being stable under differentiation, we get that $G_{\varepsilon}'$ also belongs to the Laguerre-Pólya class. So that the roots of $G_{\varepsilon}'$ are all real, and hence those of $f_{\varepsilon}$ as well.

\end{proof}
\subsubsection{Theorem~\ref{th:representability} in the general case}

When $\varepsilon = 0$, we have in the general case the following from \citet{culver1966existence}:

Let $A \in \RR^{d \times d}$. Then $A$ can be represented by a first-order model~\eqref{eq:first_order_lin} \textbf{if and only if} $A$ is not singular and each Jordan block of $A$ corresponding to an eigen value  $\lambda < 0$ occurs an even number of time. 

We now state and demonstrate the equivalent of this result for second order models~\eqref{eq:second_order_lin}.

\begin{theorem}[Representable mappings for a Momentum ResNet with linear residual functions -- General case]

Let $A \in \RR^{d \times d}$.

If $A$ can be represented by a second-order model~\eqref{eq:second_order_lin}, then each Jordan block of A corresponding to an eigen value  $\lambda < \lambda_{\varepsilon}$ occurs an even number of time.

Reciprocally, if each Jordan block of A corresponding to an eigen value  $\lambda \leq \lambda_{\varepsilon}$ occurs an even number of time, then $A$ can be represented by a second-order model. 
\end{theorem}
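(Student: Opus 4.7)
The plan is to reduce the general case to a careful analysis of how $f_{\varepsilon}$ acts on Jordan blocks, combining three ingredients: Lemma~\ref{lemma:surjectivity} ($f_{\varepsilon}$ is surjective on $\CC$), Lemma~\ref{lemma:zeros_derivative} ($f_{\varepsilon}'$ vanishes nowhere on $\CC\setminus\RR$), and the classical Gantmacher formula: if $f$ is analytic at $z\in\CC$ and $J_a(z)$ denotes a Jordan block of size $a$ with eigenvalue $z$, then $f(J_a(z))$ has eigenvalue $f(z)$ and is itself a single Jordan block of the same size $a$ whenever $f'(z)\neq 0$. Throughout, the key observation is that $f_{\varepsilon}(\RR) = [\lambda_{\varepsilon},+\infty[$, so eigenvalues of $A$ strictly below $\lambda_{\varepsilon}$ force non-real preimages, and $f_{\varepsilon}$ commutes with complex conjugation, which will produce the desired pairing.

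For the necessity direction, suppose $A = f_{\varepsilon}(X)$ with $X\in\RR^{d\times d}$, and decompose $X$ over $\CC$ as a direct sum of Jordan blocks $J_{a_i}(z_i)$. Fix a Jordan block $J_a(\lambda)$ of $A$ with $\lambda < \lambda_{\varepsilon}$. Any $z_i$ with $f_{\varepsilon}(z_i)=\lambda$ must lie in $\CC\setminus\RR$ since $f_{\varepsilon}(\RR)\subset[\lambda_{\varepsilon},+\infty[$; by Lemma~\ref{lemma:zeros_derivative}, $f_{\varepsilon}'(z_i)\neq 0$, so the Gantmacher formula ensures each such $J_{a_i}(z_i)$ contributes exactly one Jordan block of size $a_i$ and eigenvalue $\lambda$ to $A$. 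Since $X$ is real its complex Jordan form is invariant under conjugation, so each $J_{a_i}(z_i)$ appears together with $J_{a_i}(\bar z_i)$, and $f_{\varepsilon}(\bar z_i)=\overline{f_{\varepsilon}(z_i)} = \lambda$. Thus Jordan blocks of $A$ at eigenvalue $\lambda < \lambda_{\varepsilon}$ appear in conjugate-matched pairs of identical size, hence with even multiplicity.

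For sufficiency, I would construct $X$ block by block from the Jordan data of $A$. For each Jordan block $J_a(\lambda)$ with $\lambda > \lambda_{\varepsilon}$ real, pick a preimage $x\in\RR$ with $f_{\varepsilon}(x)=\lambda$ and $f_{\varepsilon}'(x)\neq 0$ (such a choice exists because $f_{\varepsilon}$ is non-constant and only hits the value $\lambda_{\varepsilon}$ at its critical points), and use $J_a(x)\in\RR^{a\times a}$. For each matched pair of Jordan blocks of size $a$ with $\lambda\in\CC\setminus\RR$ or with $\lambda\leq\lambda_{\varepsilon}$ real, use Lemma~\ref{lemma:surjectivity} to pick $z\in\CC\setminus\RR$ with $f_{\varepsilon}(z)=\lambda$, form the complex block $J_a(z)\oplus J_a(\bar z)$, and apply the standard realification (a block-diagonal change of basis by $\bigl(\begin{smallmatrix}I & I\\ iI & -iI\end{smallmatrix}\bigr)$ inside each block) to obtain a real $2a\times 2a$ block. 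Direct summing yields $X\in\RR^{d\times d}$, and applying $f_{\varepsilon}$ recovers $A$ thanks to Gantmacher and the fact that $f_{\varepsilon}'(z)\neq 0$ for every non-real $z$ used.

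The main obstacle is controlling the Jordan structure of $f_{\varepsilon}(X)$ precisely where $f_{\varepsilon}'$ can vanish, which happens only at real critical points. This is exactly what forces the asymmetry between the necessary condition ($\lambda < \lambda_{\varepsilon}$ strictly) and the sufficient one ($\lambda \leq \lambda_{\varepsilon}$): at $\lambda = \lambda_{\varepsilon}$ a real Jordan block $J_a(x)$ with $f_{\varepsilon}'(x)=0$ splits under $f_{\varepsilon}$ into several smaller blocks determined by the order of vanishing of $f_{\varepsilon}$ at $x$, so the simple pairing argument no longer detects parity, and conversely the existence of non-real preimages given by Lemma~\ref{lemma:surjectivity} together with Lemma~\ref{lemma:zeros_derivative} still allows constructing paired preimages for $\lambda = \lambda_{\varepsilon}$. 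Carefully writing out the Gantmacher formula at a critical point, and verifying that the realification step indeed lands in $\RR^{d\times d}$ and commutes with $f_{\varepsilon}$, is where the bulk of the technical work lies.
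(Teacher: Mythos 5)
Your overall strategy is the same as the paper's: decompose $X$ (resp.\ build $X$) block by block, use the surjectivity of $f_{\varepsilon}$ on $\CC$, the fact that $f_{\varepsilon}'$ does not vanish off the real axis, and Gantmacher's theorem on the Jordan structure of $f(J)$, with the pairing coming from $f_{\varepsilon}(\bar z)=\overline{f_{\varepsilon}(z)}$ and the realness of $X$. The necessity direction as you write it matches the paper's argument and is sound.

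There is, however, a genuine gap in your sufficiency argument at the boundary value $\lambda=\lambda_{\varepsilon}$. You propose to handle a matched pair of blocks at a real eigenvalue $\lambda\leq\lambda_{\varepsilon}$ by picking $z\in\CC\setminus\RR$ with $f_{\varepsilon}(z)=\lambda$ and forming $J_a(z)\oplus J_a(\bar z)$. For $\lambda<\lambda_{\varepsilon}$ this is fine: since $f_{\varepsilon}(\RR)=[\lambda_{\varepsilon},+\infty[$, \emph{any} preimage supplied by Lemma~\ref{lemma:surjectivity} is automatically non-real, and Lemma~\ref{lemma:zeros_derivative} then gives $f_{\varepsilon}'(z)\neq 0$. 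But for $\lambda=\lambda_{\varepsilon}$ the surjectivity lemma only guarantees \emph{some} preimage, and there is an obvious real one, namely the minimizer $x_{\varepsilon}$, at which $f_{\varepsilon}'(x_{\varepsilon})=0$; nothing you have invoked shows that a \emph{non-real} preimage of $\lambda_{\varepsilon}$ exists, and without one your construction breaks precisely at this value (a block $J_a(x_{\varepsilon})$ does not map to a single block of size $a$, and you have no conjugate pair to realify). The paper's proof avoids this entirely by taking the opposite route at $\lambda_{\varepsilon}$: it uses the real critical point $x_{\varepsilon}$, notes $f_{\varepsilon}'(x_{\varepsilon})=0$ but $f_{\varepsilon}''(x_{\varepsilon})\neq 0$, and applies the degenerate case of Gantmacher's theorem, under which a single real block $J_{2c}(x_{\varepsilon})$ splits into exactly two blocks $J_{c}(\lambda_{\varepsilon})$ --- which is precisely the pair of equal-size blocks the hypothesis provides. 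Your route could be repaired by proving that $f_{\varepsilon}-\lambda_{\varepsilon}$ does have non-real zeros (e.g.\ it is an entire function of order $1/2$, hence has infinitely many zeros, while only finitely many can be real because $f_{\varepsilon}\to+\infty$ on $\RR_+$ and oscillates with $\liminf=-e^{-1/(2\varepsilon)}>\lambda_{\varepsilon}$ on $\RR_-$), but this extra argument is needed and is not a consequence of the two lemmas you cite. On the plus side, you correctly identify the block-splitting at critical points as the reason for the strict/non-strict asymmetry between the two directions of the statement.
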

\begin{proof}

We refer to the arguments from \citet{culver1966existence} and use results from \citet{gant} for the proof. 

Suppose that $A$ can be represented by a second-order model~\eqref{eq:second_order_lin}. This means that there exists $X \in \RR^{d\times d}$ such that $\mathit{A = f_{\varepsilon}(X)}$. The fact that $X$ is real implies that its Jordan blocks are:
\begin{equation*}
    \begin{split}
        &(\lambda - z_k)^{a_k}  ,\, z_k \in \RR \\
         &(\lambda - z_k)^{b_k} \text{ and } (\lambda - \bar{z_k})^{b_k} , \, z_k \in \CC -\RR .
    \end{split}
\end{equation*}

Let $\lambda_k = f_{\varepsilon}(z_k)$ be an eigenvalue of $A$ such that $\lambda_k < \lambda_{\varepsilon}$. Necessarily, $z_k \in \CC -\RR$, and $f_{\varepsilon}'(z_k) \neq 0$ thanks to Lemma~\ref{lemma:zeros_derivative}. We then use Theroem 9 from \citet{gant} (p. 158) to get that the Jordan blocks of $A$ corresponding to  $\lambda_k$ are 
\begin{equation*}
         (\lambda - f_{\varepsilon}(z_k))^{b_k} \text{ and } (\lambda - f_{\varepsilon}(\bar{z_k}))^{b_k}.
\end{equation*}
 Since  $f_{\varepsilon}(\bar{z_k}) = f_{\varepsilon}(z_k) = \lambda_k$, we can conclude that the Jordan blocks of A corresponding $\lambda_k < \lambda_{\varepsilon}$ occur an even number of time.

Now, suppose that each Jordan block of $A$ corresponding to an eigen value 
$
\lambda \leq \lambda_{\varepsilon} 
$
occurs an even number of times.
Let $\lambda_k$ be an eigenvalue of $A$.
\begin{itemize}
    \item If $\lambda_k \in \CC-\RR$  we can write, because $f_{\varepsilon}$ is surjective (proved in Lemma~\ref{lemma:surjectivity}), $\lambda_k = f_{\varepsilon}(z_k)$ with $z_k \in \CC-\RR$. Necessarily, because $A$ is real, the Jordan blocks of $A$ corresponding to $\lambda_k$ have to be associated to those corresponding to $\bar{\lambda_k}$. In addition, thanks to Lemma~\ref{lemma:zeros_derivative}, $f_{\varepsilon}'(z_k) \neq 0$ 
    \item If $\lambda_k < \lambda_{\varepsilon}$, we can write, because $f_{\varepsilon}$ is surjective, $\lambda_k = f_{\varepsilon}(z_k) = f_{\varepsilon}(\bar{z_k})$ with $z_k \in \CC-\RR$. In addition, $f_{\varepsilon}'(z_k) \neq 0$.
    \item If $\lambda_k > \lambda_{\varepsilon}$, then there exists $z_k \in \RR$ such that $\lambda_k = f_{\varepsilon}(z_k)$ and $f_{\varepsilon}'(z_k) \neq 0 $ because, if $x_{\varepsilon}$ is such that $f_{\varepsilon}(x_{\varepsilon}) = \lambda_{\varepsilon}$, we have that $f_{\varepsilon}'>0$ on $]x_{\varepsilon}, +\infty[$.
     \item If $\lambda_k = \lambda_{\varepsilon}$, there exists $z_k \in \RR$ such that $\lambda_k = f_{\varepsilon}(z_k)$. Necessarily, $f_{\varepsilon}'(z_k) = 0$ but $f_{\varepsilon}''(z_k) \neq 0$.
\end{itemize}

This shows that the Jordan blocks of $A$ are necessarily of the form 

\begin{equation*}
    \begin{split}
    &(\lambda - f_{\varepsilon}(z_k))^{b_k} \text{ and } (\lambda - f_{\varepsilon}(\bar{z_k}))^{b_k}  ,\, z_k \in \CC -\RR\\
        &(\lambda - f_{\varepsilon}(z_k))^{a_k}   ,\, z_k \in \RR ,\, f_{\varepsilon}(z_k)\neq\lambda_{\varepsilon} \\
         &(\lambda - \lambda_{\varepsilon})^{c_k} \text{ and } (\lambda - \lambda_{\varepsilon})^{c_k}.
    \end{split}
\end{equation*}

Let $Y\in \RR^{d\times d}$ be such that its Jordan blocks are of the form 

\begin{equation*}
    \begin{split}
         & (\lambda - z_k)^{b_k} \text{ and } (\lambda - \bar{z_k})^{b_k}  ,\, z_k \in \CC -\RR, \, f_{\varepsilon}'(z_k)\neq 0 \\
         &(\lambda - z_k)^{a_k}  ,\, z_k \in \RR ,\, f_{\varepsilon}(z_k)\neq\lambda_{\varepsilon}, \,f_{\varepsilon}'(z_k)\neq 0 \\
         &(\lambda - z_k)^{2c_k}  ,\, z_k \in \RR,\, f_{\varepsilon}(z_k) = \lambda_{\varepsilon}.
    \end{split}
\end{equation*}

 Then again by the use of Theorem 7 from \citet{gant} (p. 158), because if $f_{\varepsilon}(z_k) = \lambda_{\varepsilon}$ with $z_k \in \RR$, $f_{\varepsilon}''(z_k) \neq 0$, we have that $f_{\varepsilon}(Y)$ is similar to $A$. 
 Thus $A$ writes $A = P^{-1}f_{\varepsilon}(Y)P = f_{\varepsilon}(P^{-1}YP)$ with $P \in \mathrm{GL}_d(\RR)$. 
 Then, $X = P^{-1}YP$ satisfies $X \in \RR^{d\times d}$ and $f_{\varepsilon}(X) = A$.
\end{proof}

\section{Additional theoretical results}\label{app:additional_results}

\subsection{On the convergence of the solution of a second order model when $\varepsilon \to \infty$}\label{app:prop_eps_infty}
\begin{proposition}[Convergence of the solution when $\varepsilon \xrightarrow{} + \infty$]\label{prop:eps_infty}

We let $x^*$ (resp. $x_{\varepsilon}$) be the solution of $\ddot{x} = f(x,\theta)$ (resp. $\ddot{x} + \frac{1}{\varepsilon} \dot{x} = f(x,\theta)$) on $[0,T]$, with initial conditions $x^*(0) = x_{\varepsilon}(0)=x_0$ and $\dot{x}^*(0) = \dot{x}_{\varepsilon}(0) = v_0$.
Then $x_{\varepsilon}$ converges uniformly to $x^*$ as $\varepsilon \xrightarrow{} + \infty$. 
\end{proposition}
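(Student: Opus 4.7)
The strategy is to reduce the problem to a regular (non-singular) perturbation of an ODE and then to invoke continuous dependence of solutions on parameters. Set $\eta \triangleq 1/\varepsilon$, so that the equation satisfied by $x_{\varepsilon}$ becomes
\begin{equation*}
\ddot{x} + \eta \dot{x} = f(x,\theta), \qquad (x(0),\dot{x}(0)) = (x_0,v_0),
\end{equation*}
and $x^*$ corresponds exactly to the value $\eta = 0$. In contrast with Proposition~\ref{prop:eps_0}, the small parameter now multiplies a lower-order term, so this is a regular perturbation: the limiting ODE is of the same order as the perturbed one, and no boundary-layer phenomenon appears.

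The first step is to rewrite the equation as a first-order system in $u = (x,v)$ with $v = \dot{x}$, giving
\begin{equation*}
\dot{x} = v, \qquad \dot{v} = f(x,\theta) - \eta v, \qquad (x(0),v(0)) = (x_0,v_0).
\end{equation*}
The right-hand side depends jointly continuously on $(u,\eta)$, and under the standing (local Lipschitz) assumptions on $f$ that were used implicitly in Proposition~\ref{prop:eps_0} it is locally Lipschitz in $u$, uniformly in $\eta$ in a neighborhood of $0$.

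The second step is to apply the classical continuous dependence theorem for ODEs (equivalently, one could mimic the implicit function theorem argument of the proof of Proposition~\ref{prop:eps_0}, redefining
\[
\Psi(u,\eta) = \Bigl( x_0 - x + \int_0^t v,\; v_0 - v + \int_0^t \bigl( f(x,\theta) - \eta v \bigr) \Bigr),
\]
and checking that $\partial_u \Psi(u^*,0)$ is a continuous linear isomorphism by reducing to the uniqueness of the linearized ODE around $x^*$). This yields a neighborhood $U$ of $0$ and a continuous map $\eta \mapsto u_\eta$ with $u_0 = u^*$, hence $x_{\varepsilon} \to x^*$ uniformly on $[0,T]$ as $\varepsilon \to \infty$.

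A fully elementary alternative, which I would use if one wishes to avoid the implicit function theorem, is a direct Gr\"onwall estimate: letting $y_\varepsilon = x_\varepsilon - x^*$, one has $y_\varepsilon(0) = \dot{y}_\varepsilon(0) = 0$ and
\[
\ddot{y}_\varepsilon = -\tfrac{1}{\varepsilon} \dot{x}^* - \tfrac{1}{\varepsilon} \dot{y}_\varepsilon + \bigl( f(x_\varepsilon,\theta) - f(x^*,\theta) \bigr).
\]
Integrating twice, using the Lipschitz continuity of $f$ in $x$ and the uniform bound on $\dot{x}^*$ over $[0,T]$, gives $\|y_\varepsilon\|_\infty + \|\dot{y}_\varepsilon\|_\infty \leq C(T)/\varepsilon$ by Gr\"onwall, which is the desired conclusion. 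The main (mild) obstacle is ensuring that $x_\varepsilon$ remains in a compact set where $f$ is Lipschitz on the whole interval $[0,T]$, which follows from a standard bootstrap argument for small enough $1/\varepsilon$.
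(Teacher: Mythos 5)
Your main argument---rewriting the equation as a first-order system in phase space with parameter $\eta = 1/\varepsilon$ and invoking continuous dependence of solutions on parameters (Cauchy--Lipschitz with parameters)---is exactly the proof given in the paper, and it is correct. The alternative routes you sketch (mimicking the implicit function theorem argument of Proposition~\ref{prop:eps_0}, or a direct Gr\"onwall estimate) are sound as well but not needed.
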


\begin{proof}
The equation $\ddot{x} + \frac{1}{\varepsilon} \dot{x} = f(x,\theta)$ with $x_{\varepsilon}(0)=x_0$, $\dot{x}_{\varepsilon}(0) = v_0$ writes in phase space $(x,v)$
\begin{equation*}
\begin{split}
\begin{cases}
\dot{x} &= v, \quad x(0) = x_0 \\
\dot{v} &= f(x,\theta) - \frac{v}{\varepsilon}, \quad v(0) = v_0.
\end{cases}
\end{split}
\end{equation*}

It then follows from the Cauchy-Lipschitz Theorem with parameters \citep[Theorem 2, Chapter 2]{perko2013differential} that the solutions of this system are continuous in the parameter $\frac1\varepsilon$. That is $x_{\varepsilon}$ converges uniformly to $x^*$ as $\varepsilon \xrightarrow{} + \infty$.

\end{proof}

\subsection{Universality of Momentum ResNets}\label{app:learn_init}

\begin{proposition}[When $v_0$ is free any mapping can be represented]\label{learn_init_speed}

 Consider $h : \RR^d \xrightarrow{} \RR^d$, and the ODE
 \begin{equation*}
     \begin{split}
        \ddot{x} + \dot{x} &= 0\\
        (x(0),\dot{x}(0)) &=(x_0,\frac{h(x_0) - x_0}{1- 1/e})
    \end{split}
 \end{equation*}
 Then $\phi_1(x_0) = h(x_0)$.
\end{proposition}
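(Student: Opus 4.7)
The plan is to solve this ODE explicitly and then verify by direct substitution that the flow at $t=1$ equals $h(x_0)$. The equation $\ddot{x} + \dot{x} = 0$ is linear with constant coefficients and decouples across coordinates, so I can treat $x$ as a vector-valued unknown with vector-valued integration constants; no special structure of $h$ needs to be exploited.

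First I would write the characteristic polynomial $r^2 + r = 0$, whose roots are $0$ and $-1$. The general solution is therefore $x(t) = A + B e^{-t}$ for some $A, B \in \RR^d$, and $\dot{x}(t) = -B e^{-t}$. Then I would impose the two initial conditions at $t=0$: from $x(0) = A + B = x_0$ and $\dot{x}(0) = -B = \frac{h(x_0) - x_0}{1 - 1/e}$, I solve
\begin{equation*}
B = -\frac{h(x_0) - x_0}{1 - 1/e}, \qquad A = x_0 + \frac{h(x_0) - x_0}{1 - 1/e}.
\end{equation*}

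The last step is to evaluate at $t = 1$: using $e^{-1} = 1/e$,
\begin{equation*}
\phi_1(x_0) = A + B e^{-1} = x_0 + \frac{h(x_0) - x_0}{1 - 1/e}\,(1 - 1/e) = x_0 + (h(x_0) - x_0) = h(x_0),
\end{equation*}
which is the claim. There is no genuine obstacle here: the argument reduces to elementary integration of a scalar constant-coefficient ODE applied componentwise. The only thing worth flagging is that $1 - 1/e \neq 0$, so the prescribed initial velocity is well-defined for every mapping $h$ and every starting point $x_0$, which is precisely what allows this construction to represent arbitrary $h$ when the initial velocity is permitted to depend freely on $x_0$.
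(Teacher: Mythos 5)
Your proof is correct and is essentially the paper's argument: the paper simply states the closed-form solution $\phi_t(x_0) = x_0 - v_0(e^{-t}-1)$, which is exactly your $A + Be^{-t}$ after substituting the constants, and evaluates at $t=1$. Your version just spells out the characteristic-root computation that the paper leaves implicit.
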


\begin{proof}
This is because the solution is $\phi_t(x_0) = x_0 - v_0(e^{-t} - 1)$. 
\end{proof}

\subsection{Non-universality of Momentum ResNets when $v_0=0$}

\begin{proposition}[When $v_0 = 0$ there are mappings that cannot be learned if the equation is autonomous.]\label{proposition:limits_init_0}

When $d = 1$, consider the autonomous ODE \begin{equation}\label{eq:limits_init_0}
\begin{split}
\varepsilon \ddot{x} + \dot{x} &= f(x) \\
(x(0),\dot{x}(0)) &= (x_0,0)
\end{split}
\end{equation}
If there exists $x_0 \in \RR+^{*}$ such that $h(x_0) \leq -x_0 $  and $x_0 \leq h(-x_0)$ then $h$ cannot be represented by~\eqref{eq:limits_init_0}.
\end{proposition}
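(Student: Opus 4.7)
The plan is to use a Lyapunov/energy argument that exploits the dissipative structure of the autonomous ODE $\varepsilon \ddot{x} + \dot{x} = f(x)$. I will fix a primitive $V$ of $-f$ (so that $V' = -f$) and set
$$E(t) \triangleq \tfrac{\varepsilon}{2}\dot{x}(t)^2 + V(x(t)).$$
A direct differentiation, substituting $\varepsilon \ddot{x} = f(x) - \dot{x}$ from the ODE, yields $E'(t) = -\dot{x}(t)^2 \leq 0$. Hence $E$ is non-increasing along any trajectory, and satisfies the identity $E(s) - E(0) = -\int_0^s \dot{x}(t)^2\, dt$, so it is strictly decreasing on any interval where $\dot{x}$ is not identically zero.

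Arguing by contradiction, I will assume that $h$ is represented by~\eqref{eq:limits_init_0} for some $f$, and denote by $x_1$ and $x_2$ the two solutions of the ODE with initial data $(x_0,0)$ and $(-x_0,0)$ respectively. From $x_1(0) = x_0 > 0$ and $x_1(1) = h(x_0) \leq -x_0$, the intermediate value theorem yields $s_1 \in (0,1]$ with $x_1(s_1) = -x_0$, and symmetrically there is $s_2 \in (0,1]$ with $x_2(s_2) = x_0$. Since $\dot{x}_1(0) = 0$, one has $E_1(0) = V(x_0)$, and $E_1(s_1) \geq V(-x_0)$; combining this with the monotonicity of $E_1$ produces $V(-x_0) \leq V(x_0)$, and the symmetric computation on $x_2$ gives $V(x_0) \leq V(-x_0)$. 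Upgrading both inequalities to strict ones yields the sought contradiction.

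The main (really only) point to check carefully is the strictness, since non-strict inequalities would merely force $V(x_0) = V(-x_0)$. Strictness reduces to the claim that $\dot{x}_1$ does not vanish identically on $[0,s_1]$; were it to, $x_1$ would be constant equal to $x_0$ on that interval, contradicting $x_1(s_1) = -x_0 \neq x_0$ (using $x_0 > 0$). Hence $\int_0^{s_1}\dot{x}_1(t)^2\, dt > 0$, giving $V(-x_0) < V(x_0)$, and the same remark applied to $x_2$ gives $V(x_0) < V(-x_0)$, which is impossible. No further ingredient is needed, so the bulk of the writeup will simply be making this single strict/weak inequality bookkeeping precise.
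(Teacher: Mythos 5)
Your proof is correct and follows essentially the same route as the paper's: introduce the potential $V=-\int f$ (possible since $d=1$), observe that the energy $\tfrac{\varepsilon}{2}\dot{x}^2+V(x)$ decreases along trajectories, use the intermediate value theorem to find the crossing times, and derive a contradiction from the two resulting inequalities. The only cosmetic difference is that the paper first deduces $V(x_0)=V(-x_0)$ from the two weak inequalities and then argues the trajectory must be constant, whereas you make both inequalities strict directly; both versions of this last bookkeeping step are valid.
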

This in particular proves that $x \mapsto \lambda x$ for $\lambda \leq -1$ cannot be represented by this ODE with initial conditions $(x_0,0)$.

 \begin{proof}
 Consider such an $x_0$ and $h$. Since $\phi_1 (x_0) = h(x_0) \leq -x_0$, that $\phi_0(x_0) = x_0$ and that $t \mapsto \phi_t(x_0)$ is continuous, we know that there exists $t_0 \in [0,1]$ such that $\phi_{t_0}(x_0) = -x_0$.
We denote $x(t) = \phi_t(x_0) $, solution of 

$$
     \ddot{x} + \frac{1}{\varepsilon} \dot{x} = f(x)
$$
Since $d = 1$, one can write $f$ as a derivative: $f=-E'$. 
The energy $E_m = \frac{1}{2}\dot{x}^2 + E$ satisfies: 
$$
\dot{E_m} = -\frac{1}{\varepsilon}\dot{x}^2
$$
So that 
$$
E_m(t_0) - E_m(0) = -\frac{1}{\varepsilon} \int_{0}^{t_0}\dot{x}^2
$$
In other words: 
$$
\frac{1}{2}v(t_0)^2 + \frac{1}{\varepsilon} \int_{0}^{t_0}\dot{x}^2 + E(-x_0) = E(x_0)
$$
So that $E(-x_0) \leq E(x_0)$
We now apply the exact same argument to the solution starting at $x_1 = -x_0$. Since $x_0 \leq h(-x_0) = h(x_1)$ there exists $t_1 \in [0,1]$ such that $\phi_{t_1}(x_1) = x_0$. So that:
$$
\frac{1}{2}v(t_1)^2 + \frac{1}{\varepsilon} \int_{0}^{t_1}\dot{x}^2 + E(x_0) = E(-x_0)
$$
So that $E(x_0) \leq E(-x_0)$.
We get that 
$$
E(x_0) = E(-x_0)
$$
This implies that $\dot{x} = 0$ on $[0,t_0]$, so that the first solution is constant and $x_0 = -x_0$ which is absurd because $x_0 \in \RR*$.
\end{proof}
\subsection{When $v_0 = 0$ there are mappings that can be represented by a second-order model but not by a first-order one.}\label{app:prop_lambda}

\begin{proposition}\label{prop:lambda}
There exits $f$ such that the solution of 
$$
     \ddot{x} + \frac{1}{\varepsilon} \dot{x} = f(x)
$$
with initial condition $(x_0,0)$
at time $1$ is
$$
x(1) = -x_0 \times \exp(-\frac{1}{2 \varepsilon})
$$
\end{proposition}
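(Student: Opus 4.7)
The plan is to search for a suitable $f$ inside the simplest class: the linear residual functions $f(x) = \alpha x$ with $\alpha \in \RR$. Then the equation becomes the constant-coefficient linear ODE $\ddot{x} + \frac{1}{\varepsilon}\dot{x} - \alpha x = 0$, which can be solved in closed form via its characteristic polynomial $r^2 + \frac{r}{\varepsilon} - \alpha = 0$. For $\alpha < -\frac{1}{4\varepsilon^2}$ the roots are complex conjugate $r = -\frac{1}{2\varepsilon} \pm i\omega$ with $\omega = \sqrt{-\alpha - \frac{1}{4\varepsilon^2}}$, and the solution takes the damped-oscillation form
\begin{equation*}
x(t) = e^{-t/(2\varepsilon)}\bigl(A\cos(\omega t) + B\sin(\omega t)\bigr).
\end{equation*}

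The second step is to impose the initial conditions $x(0) = x_0$ and $\dot{x}(0) = 0$. The first gives $A = x_0$, and differentiating and evaluating at $0$ gives $B = \frac{x_0}{2\varepsilon \omega}$. Plugging into the solution and evaluating at $t=1$, one obtains
\begin{equation*}
x(1) = x_0 e^{-1/(2\varepsilon)} \left( \cos(\omega) + \frac{\sin(\omega)}{2\varepsilon \omega} \right).
\end{equation*}
To match the target $-x_0 \exp(-\frac{1}{2\varepsilon})$, it suffices to exhibit $\omega > 0$ such that $\cos(\omega) + \frac{\sin(\omega)}{2\varepsilon \omega} = -1$. The choice $\omega = \pi$ works immediately, since $\cos(\pi) = -1$ and $\sin(\pi) = 0$.

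This yields the explicit solution $\alpha = -\pi^2 - \frac{1}{4\varepsilon^2}$, so that $f(x) = -\bigl(\pi^2 + \frac{1}{4\varepsilon^2}\bigr)x$ provides the desired example. There is essentially no hard obstacle: the whole argument reduces to finding one real zero of the explicit function $\omega \mapsto \cos(\omega) + \frac{\sin(\omega)}{2\varepsilon\omega} + 1$, and $\omega = \pi$ is a trivial zero by inspection. The only mild subtlety is to check that one has indeed picked $\alpha$ in the underdamped regime $\alpha < -\frac{1}{4\varepsilon^2}$, which is obvious from $\alpha = -\pi^2 - \frac{1}{4\varepsilon^2}$.
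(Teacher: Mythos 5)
Your proof is correct and arrives at exactly the same residual function $f(x) = -\bigl(\pi^2 + \tfrac{1}{4\varepsilon^2}\bigr)x$ as the paper; the only difference is that you derive it via the characteristic equation whereas the paper simply exhibits the solution $x(t) = x_0 e^{-t/(2\varepsilon)}\bigl(\cos(\pi t) + \tfrac{1}{2\pi\varepsilon}\sin(\pi t)\bigr)$ and verifies it. Same approach, slightly more constructive presentation.
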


\begin{proof}
Consider the ODE
\begin{equation}
    \ddot{x} + \frac{1}{\varepsilon} \dot{x} = (-\pi^2 - \frac{1}{4\varepsilon^2}) x
\end{equation}
with initial condition $(x_0,0)$
The solution of this ODE is 
$$
x(t) = x_0  e^{-\frac{t}{2 \varepsilon}}(  \cos(\pi  t) + \frac{1}{2\pi \varepsilon}\sin(\pi t))
$$
which at time 1 gives:
$$
x(1) = -x_0 e^{-\frac{1}{2 \varepsilon}}
$$
\end{proof}

\subsection{Orientation preservation of first-order ODEs}\label{app:prop_connected}

\begin{proposition}[The homeomorphisms represented by~\eqref{eq:first_order_ODE} are orientation preserving.]\label{prop:connexe}
If $K \subset \RR^d$ is a compact set and $h : K\xrightarrow{} \RR^d$ is a homeomorphism represented by~\eqref{eq:first_order_ODE}, then $h$ is in the connected component of the identity function on $K$ for the $\|.\|_{\infty}$ topology.
\end{proposition}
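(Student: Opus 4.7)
The plan is to exhibit an explicit continuous path in $\bigl(C(K,\RR^d),\|\cdot\|_\infty\bigr)$ from the identity $\mathrm{id}_K$ to $h$, which is enough to conclude that $h$ lies in the connected component of $\mathrm{id}_K$.

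By assumption, $h$ is represented by the first-order ODE~\eqref{eq:first_order_ODE}, i.e. there exist $f$ and a time horizon $T$ such that $h(x_0)=\phi_T(x_0)$ for all $x_0\in K$, where $\phi_t$ denotes the flow. I would first invoke the Cauchy--Lipschitz theorem to guarantee that $\phi_t$ is well-defined and continuous on $[0,T]\times K$: since $K$ is compact, the trajectories remain in a compact neighbourhood up to time $T$ (enlarging $f$ outside of this neighbourhood by a smooth cutoff if necessary, which preserves the restriction of $h$ to $K$).

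Next, I would define the candidate path
\[
\Gamma\colon [0,T]\longrightarrow C(K,\RR^d),\qquad \Gamma(t)=\phi_t\big|_K,
\]
so that $\Gamma(0)=\mathrm{id}_K$ and $\Gamma(T)=h$. The main step is to show that $\Gamma$ is continuous for the $\|\cdot\|_\infty$ topology. Since $(t,x_0)\mapsto\phi_t(x_0)$ is continuous on the compact set $[0,T]\times K$, it is in fact uniformly continuous there. Hence for any $t_0\in[0,T]$ and $\varepsilon>0$, there exists $\delta>0$ such that $|t-t_0|<\delta$ implies $\|\phi_t(x_0)-\phi_{t_0}(x_0)\|<\varepsilon$ uniformly in $x_0\in K$, which is exactly $\|\Gamma(t)-\Gamma(t_0)\|_\infty<\varepsilon$.

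Finally, the image $\Gamma([0,T])$ is a connected subset of $C(K,\RR^d)$ containing both $\mathrm{id}_K$ and $h$, so $h$ belongs to the connected component of the identity. The only mild technical point is the uniform continuity in time of the flow on $K$, which I expect to handle by the cutoff argument above to avoid blow-up issues before time $T$; once this is in place the rest is routine.
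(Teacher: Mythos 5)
Your proposal is correct and follows essentially the same route as the paper: both exhibit the path $t \mapsto \phi_t$ in $C(K,\RR^d)$ and show it is continuous for the sup norm (you via uniform continuity of the jointly continuous flow on the compact set $[0,T]\times K$, the paper via the explicit bound $\|\phi_{t+\varepsilon}(x_0)-\phi_t(x_0)\|\le \varepsilon M_f$ with $M_f$ a bound on $f$ over the compact image of the flow), then conclude by connectedness of the path's image. The cutoff remark is a harmless extra; the paper simply assumes the flow is defined up to the time horizon, which is implicit in $h$ being represented by the ODE.
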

We first prove the following: 
\begin{lemma}\label{lemma:compactness}
Consider $K \subset \RR^d$ a compact set. Suppose that $\forall x \in K$, $\Phi_t(x)$ is defined for all $t \in [0,1]$.
Then 
$$
C = \{\Phi_t(x) \mid x \in K, t\in [0,1]\} 
$$
is compact as well.
\end{lemma}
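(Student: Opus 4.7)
The plan is to express $C$ as the image of a compact set under a continuous map, so that standard topology gives compactness immediately.

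First, I would observe that $[0,1] \times K$ is compact as a product of two compact sets (Tychonoff, or more elementarily, since both factors are closed and bounded in finite-dimensional spaces). Second, I would invoke the classical continuous-dependence theorem for ODEs (for instance, Theorem 2, Chapter 2 of Perko's \emph{Differential Equations and Dynamical Systems}, already cited in the appendix), which ensures that the flow map
\begin{equation*}
\Phi : [0,1] \times K \longrightarrow \RR^d, \qquad (t,x) \longmapsto \Phi_t(x),
\end{equation*}
is jointly continuous in $(t,x)$ on its domain of definition; by assumption this domain contains $[0,1] \times K$. Finally, I would conclude using the fact that the continuous image of a compact set is compact, so
\begin{equation*}
C \;=\; \Phi\bigl([0,1] \times K\bigr)
\end{equation*}
is compact in $\RR^d$.

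The only subtlety — which I would not treat as a real obstacle — is making sure the hypotheses under which the proposition is stated guarantee the joint continuity of $\Phi$. Since the first-order ODE~\eqref{eq:first_order_ODE} in the paper is implicitly assumed to be well-posed (so that $\phi_t$ is a homeomorphism, as stated before Proposition~\ref{prop:connexe}), $f$ is at least locally Lipschitz in $x$ and continuous in $t$, which is precisely what the continuous-dependence theorem requires. Thus the proof reduces to a one-line application of ``continuous image of a compact is compact,'' and no nontrivial calculation is needed.
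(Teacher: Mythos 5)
Your proposal is correct and is essentially the paper's argument: both reduce the claim to the joint continuity of $(t,x)\mapsto\Phi_t(x)$ on the compact product $[0,1]\times K$ and then conclude by compactness of that product. The only difference is that you cite the standard continuous-dependence theorem for this continuity, whereas the paper proves it by hand via a subsequence extraction and a Gronwall estimate; both are valid.
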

\begin{proof}
We consider $(\Phi_{t_n}(x_n))_{n\in \NN}$ a sequence in $C$. Since $K \times [0,1]$ is compact, we can extract sub sequences $(t_{\phi(n)})_{n\in \NN}$, $(x_{\phi(n)})_{n\in \NN}$ that converge respectively to $t_0$ and $x_0$. We denote them $(t_n)_{n\in \NN}$ and $(x_n)_{n\in \NN}$ again for simplicity of the notations.
We have that:
$$
\|\Phi_{t_n}(x_n)-\Phi_{t}(x)\| \leq \|\Phi_{t_n}(x_n)-\Phi_{t_n}(x)\| + \|\Phi_{t_n}(x)-\Phi_{t}(x)\|.
$$
Thanks to Gronwall's lemma, we have
$$
\|\Phi_{t_n}(x_n)-\Phi_{t_n}(x)\| \leq \|x_n-x\|\exp{(k t_n)},
$$
where $k$ is $f$'s Lipschitz constant. So that $\|\Phi_{t_n}(x_n)-\Phi_{t_n}(x)\|\xrightarrow{} 0$ as $n \xrightarrow{} \infty$.
In addition, it is obvious that $\|\Phi_{t_n}(x)-\Phi_{t}(x)\| \xrightarrow{} 0$ as $n \xrightarrow{} \infty$.
We conclude that
$$
\Phi_{t_n}(x_n) \xrightarrow{} \Phi_{t}(x) \in C,
$$
so that $C$ is compact.
\end{proof}

\begin{proof}
Let's denote by $H$ the set of homeomorphisms defined on $K$. The application 
$$
\Psi : [0,1] \xrightarrow{} H 
$$
defined by
$$
\Psi(t) = \Phi_t
$$
is continuous. Indeed, we have for any $x_0$ in $\RR^d$ that 
$$
\|\Phi_{t+\varepsilon}(x_0) - \Phi_{t}(x_0)\| = \|\int_{t}^{t+\varepsilon}f(\Phi_{s}(x_0))ds \| \leq \varepsilon M_f,
$$
where $M_f$ bounds the continuous function $f$ on $C$ defined in lemma~\ref{lemma:compactness}.
Since $M_f$ does not depend on $x_0$, we have that 
$$
\|\Phi_{t+\varepsilon} - \Phi_{t}\|_{\infty} \xrightarrow{} 0
$$
as $\varepsilon \xrightarrow{} 0$, which proves that $\Psi$ is continuous.
Since $\Psi(0) = Id_K$, we get that $\forall t \in [0,1]$, $\Phi_t$ is connected to $Id_K$.
\end{proof}

\subsection{On the linear mappings represented by autonomous first order ODEs in dimension $1$}

Consider the autonomous ODE
\begin{equation}\label{eq:autonomous}
    \dot{x} = f(x),
\end{equation}

\begin{theorem}[Linearity]\label{theo1}
Suppose $d=1$. If~\eqref{eq:autonomous} represents a linear mapping $x \mapsto ax$ at time $1$, we have that $f$ is linear.
\end{theorem}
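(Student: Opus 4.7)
The plan is to convert the linearity of the time-$1$ flow into a functional equation on $f$, and then to use a mild regularity assumption on $f$ at the origin (typically $C^1$, which is standard for the ODE to define a smooth flow) to upgrade this functional equation to honest linearity.

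First I would handle the degenerate cases. Since the flow $\phi_t$ is an orientation-preserving homeomorphism (Proposition~\ref{prop:connexe} applied on any compact interval, or equivalently since $\partial_x \phi_t(x) = \exp\int_0^t f'(\phi_s(x))\,ds > 0$), the slope $a$ of $\phi_1$ must be positive. When $a=1$, every point is a fixed point of $\phi_1$; in one dimension, trajectories of an autonomous ODE are monotone in $t$ on each sign-component of $f$, so no non-stationary trajectory can return to its starting position, which forces $f\equiv 0$ (trivially linear). When $a\neq 1$, the only point $x_0$ with $\phi_1(x_0)=x_0$ is $x_0=0$, so by the same monotonicity argument $f$ vanishes exactly at the origin.

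Next I would exploit the commutativity of autonomous flows. From $\phi_1\circ\phi_t = \phi_{t+1} = \phi_t\circ\phi_1$ and $\phi_1(y)=ay$, I obtain
\begin{equation*}
\phi_t(ax) = a\,\phi_t(x),\qquad \forall\, t\in\RR,\ x\in\RR.
\end{equation*}
Differentiating this identity at $t=0$ yields the multiplicative functional equation $f(ax)=a\,f(x)$ for every $x\in\RR$, and a straightforward induction extends it to $f(a^n x)=a^n f(x)$ for all $n\in\ZZ$.

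Finally I would invoke the differentiability of $f$ at $0$ (with $f(0)=0$). Since $a>0$ and $a\neq 1$, for any fixed $x\neq 0$ I can pick a sequence $n_k\to\pm\infty$ so that $a^{n_k}x\to 0$; then
\begin{equation*}
\frac{f(x)}{x}=\frac{f(a^{n_k}x)}{a^{n_k}x}\xrightarrow[k\to\infty]{} f'(0),
\end{equation*}
so $f(x)=f'(0)\,x$ for every $x$, which is the desired linearity. The main obstacle is that continuity of $f$ alone does not suffice: one can build a non-linear $f$ by pre-composing $\log$ with a periodic perturbation so that $f(ax)=af(x)$ still holds, but $f/x$ oscillates near the origin. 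What breaks this non-uniqueness is precisely the existence of $f'(0)$; the orientation and semigroup arguments above are essentially formal.
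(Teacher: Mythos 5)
Your proof is correct, and it reaches the key functional equation $f(ax)=af(x)$ by a genuinely different route than the paper. The paper first rules out zeros of $f$ along trajectories (for $a\neq 1$), rewrites the ODE as $\dot{x}/f(x)=1$, integrates against a primitive $F$ of $1/f$ to get $F(ax)=F(x)+1$, and differentiates; this requires the preliminary non-vanishing argument precisely so that the division by $f$ is legitimate. You instead use the semigroup property of the autonomous flow, $\phi_1\circ\phi_t=\phi_{t+1}=\phi_t\circ\phi_1$, combined with $\phi_1(y)=ay$ to get $\phi_t(ax)=a\,\phi_t(x)$, and differentiate in $t$ at $t=0$. This is cleaner: it yields $f(ax)=af(x)$ without ever dividing by $f$, and it gives $f(0)=0$ for free by setting $x=0$ (when $a\neq1$). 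Your treatment of the degenerate cases is also slightly more complete than the paper's: you justify $a>0$ via orientation preservation of one-dimensional flows (the paper silently restricts to $a>0$), and your monotonicity argument for $a=1$ is an equivalent reformulation of the paper's Rolle-plus-uniqueness argument. The final step — iterating to $f(a^nx)=a^nf(x)$ and sending $a^{n}x\to0$ to identify $f(x)/x$ with $f'(0)$ — is identical in both proofs, and your remark that mere continuity would not suffice (logarithmically periodic perturbations satisfy the functional equation) correctly pinpoints where differentiability of $f$ at the origin is genuinely used, an assumption the paper also relies on implicitly in its expansion $f(x/a^n)=\tfrac{x}{a^n}f'(0)+o(1/a^n)$.
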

\begin{proof}
If $a=1$, consider some $x_0 \in \RR$. Since $\Phi_1(x_0) = x_0 = \Phi_0(x_0)$, there exists, by Rolle's Theorem a $t_0 \in [0,1]$ such that $\dot{x}(t_0) = 0$. Then $f(x(t_0)) = 0$. But since the constant solution $y = x(t_0)$ then solves $\dot{y} = f(y), y(0)= x(t_0)$, we get by the unicity of the solutions that $x(t_0) = y(0) = x(1) = y(1-t_0) = x_0$. So that $f(x_0) = f(x(t_0)) = 0$. Since this is true for all $x_0$, we get that $f=0$. We now consider the case where $a \ne 1$ and $a > 0$.
Consider some $x_0 \in \RR^{*}$. If $f(x_0) = 0$, then the solution constant to $x_0$ solves~\eqref{theo1}, and thus cannot reach $ax_0$ at time $1$ because $a \neq 1$. Thus, $f(x_0) \ne 0$ if $x_0 \ne 0$. Second, if the trajectory starting at $x_0 \in \RR^{*}$ crosses $0$ and $f(0) = 0$, then by the same argument we know that $x_0 = 0$, which is absurd. So that, $\forall x_0 \in \RR^{*}$, $\forall t \in [0,1]$, $f(\Phi_t(x_0)) \ne 0$ .
We can thus rewrite~\eqref{theo1} as 
\begin{equation}\label{inver}
    \frac{\dot{x}}{f(x)} = 1.
\end{equation}
Consider $F$ a primitive of $\frac{1}{f}$.
Integrating~\eqref{inver}, we get
$$
F(ax_0) - F(x_0) = \int_{0}^{1} F'(x(t))\dot{x}(t)\mathrm{d} t = 1.
$$
In other words, $\forall x \in \RR*$:
$$
F(ax) = F(x) + 1.
$$
We derive this equation and get:
$$
af(x) = f(ax).
$$
This proves that $f(0) = 0$.
We now suppose that $a>1$.
We also have that 
$$
a^n f(\frac{x}{a^n}) =  f(x).
$$
But when $n \xrightarrow{} \infty$, $f(\frac{x}{a^n}) = \frac{x}{a^n}f'(0) + o(\frac{1}{a^n})$ so that 
$$
f(x) = f'(0)x
$$
and $f$ is linear.
The case $a < 1$ treats similarly by changing $a^n$ to $a^{-n}$.
\end{proof}

\subsection{There are mappings that are connected to the identity that cannot be represented by a first order autonomous ODE}

In bigger dimension, we can exhibit a matrix in $\mathrm{GL}_d^{+}(\RR)$ (and hence connected to the identity) that cannot be represented by the autonomous ODE~\eqref{eq:autonomous}.
\begin{proposition}[A non-representable matrix]\label{prop:contre_ex}
Consider the matrix 
$$
A=
\begin{pmatrix}
-1 & 0  \\
0 & -\lambda \\
\end{pmatrix},
$$ where $\lambda > 0$ and $\lambda \neq 1$.
Then $A \in GL_2^{+}(\RR)  - GL_2(\RR)^{2}$ and $A$ cannot be represented by~\eqref{eq:autonomous}.
\end{proposition}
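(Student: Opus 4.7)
My plan has two independent pieces: verifying the algebraic fact $A\in \mathrm{GL}_2^{+}(\RR)\setminus \mathrm{GL}_2(\RR)^2$, and showing non-representability by the autonomous ODE~\eqref{eq:autonomous}.

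For the first part I will observe that $\det A=\lambda>0$, so $A\in\mathrm{GL}_2^{+}(\RR)$. To rule out a real square root, I will suppose $A=M^2$ with $M\in\mathrm{GL}_2(\RR)$. The eigenvalues of $M$ cannot be real, since $-1$ and $-\lambda$ have no real square root. Hence they form a complex conjugate pair $\mu,\bar\mu$, whose squares $\mu^2,\bar\mu^2=\overline{\mu^2}$ must equal $\{-1,-\lambda\}$. But both $-1$ and $-\lambda$ are real, so $\mu^2=\bar\mu^2$, forcing $\lambda=1$, a contradiction.

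For the second part I will assume, toward a contradiction, that there is a $C^1$ vector field $f$ whose flow $\Phi_t$ satisfies $\Phi_1=A$ on $\RR^2$, and analyze the orbit $\gamma(t):=\Phi_t(0)$. Since $A\cdot 0=0$, we have $\gamma(0)=\gamma(1)=0$, so either $\gamma$ is constant or $\gamma$ is a non-trivial periodic orbit of minimal period $T=1/n$ with $n\geq 2$ integer. I will dispose of the periodic case using the commutation $\Phi_1\circ\Phi_t=\Phi_t\circ\Phi_1$: it yields $A\gamma(t)=\Phi_{t+1}(0)=\gamma(t+nT)=\gamma(t)$, so every point of $\gamma$ is a fixed point of $A$. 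Since $1\notin\mathrm{Sp}(A)$, the matrix $A-\mathrm{Id}_2$ is invertible, so the only fixed point is $0$, contradicting non-triviality of $\gamma$.

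The remaining case is $\gamma$ constant, i.e.\ $f(0)=0$. Then $0$ is an equilibrium and the standard linearization identity gives $A=D\Phi_1(0)=\exp(Df(0))$, so $A$ lies in the image of the real matrix exponential. I will then invoke the characterization of $\exp(\RR^{d\times d})$ recalled in Section~\ref{section:representation_capabilities} (Culver): each Jordan block of a negative eigenvalue must occur an even number of times. For our $A$, the eigenvalues $-1$ and $-\lambda$ are distinct, negative and each of multiplicity one, so $A\notin\exp(\RR^{2\times 2})$, which is the desired contradiction. The main subtlety I expect is the periodic-orbit case: the commutation argument to collapse $\gamma$ to $0$ is the key observation that reduces everything to the clean linearization-plus-Culver argument, and it is the step I would write most carefully.
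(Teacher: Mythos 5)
Your proof is correct, but it takes a genuinely different route from the paper's. The paper argues by a direct trajectory-intersection contradiction: it follows the point $(0,1)$, notes that its trajectory must cross the horizontal axis at some $(x,0)$ at a time $t_0\in(0,1)$ by the intermediate value theorem, and then uses autonomy together with $A^2=\mathrm{diag}(1,\lambda^2)$ to show that the trajectory issued from the distinct point $(0,\lambda^2)$ would also pass through $(x,0)$ at time $t_0$, contradicting uniqueness of solutions; this needs nothing beyond uniqueness (Lipschitz $f$) and is tailored to this particular $2\times 2$ matrix. You instead reduce to the origin being an equilibrium via the commutation $A\Phi_t=\Phi_t A$ together with $\mathrm{Fix}(A)=\{0\}$ --- a clean and correct step which in fact handles the constant and periodic cases in one stroke, so your case split on the minimal period (where ``$n\geq 2$'' should anyway read ``$n\geq 1$'') is unnecessary --- and then linearize the flow at the equilibrium to get $A=\exp(Df(0))$ and invoke Culver's characterization of $\exp(\RR^{2\times 2})$, which is equivalent to your own first part since $\exp(\RR^{d\times d})=\{M^2 \mid M\in\mathrm{GL}_d(\RR)\}$. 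What your approach buys is generality: it shows that any matrix $A$ with $1\notin\mathrm{Sp}(A)$ that is representable at time $1$ by an autonomous $C^1$ flow must lie in the image of the real matrix exponential, in any dimension. What it costs is the extra regularity assumption $f\in C^1$, needed for the variational equation $D\Phi_t(0)=e^{tDf(0)}$, whereas the paper's argument works for merely Lipschitz $f$. Your verification that $A\in \mathrm{GL}_2^{+}(\RR)\setminus \mathrm{GL}_2(\RR)^2$ is essentially the same eigenvalue computation as the paper's.
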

\begin{proof}
The fact that $A \in GL_2^{+}(\RR)  - GL_2(\RR)^{2}$ is because $A$ has two single negative eigenvalues, and because $\det(A) = \lambda > 0$.
We consider the point $(0,1)$. At time $1$, it has to be in $(0,-\lambda)$. Because the trajectory are continuous, there exists $0 < t_0 < 1$ such that the trajectory is at $(x,0)$ at time $t_0$, and thus at $(-x,0)$ at time $t_0 + 1$, and again at $(x,0)$ at time $t_0+2$. However, the particle is at $(0,\lambda^2)$ at time $2$. All of this is true because the equation is autonomous. Now, we showed that trajectories starting at $(0,1)$ and $(0,\lambda^2)$ would intersect at time $t_0$ at $(x,0)$, which is absurd. Figure~\ref{fig:contre_ex} illustrates the paradox.
\end{proof}

\begin{figure}[ht]
\begin{center}
\centerline{\includegraphics[width=0.6\columnwidth]{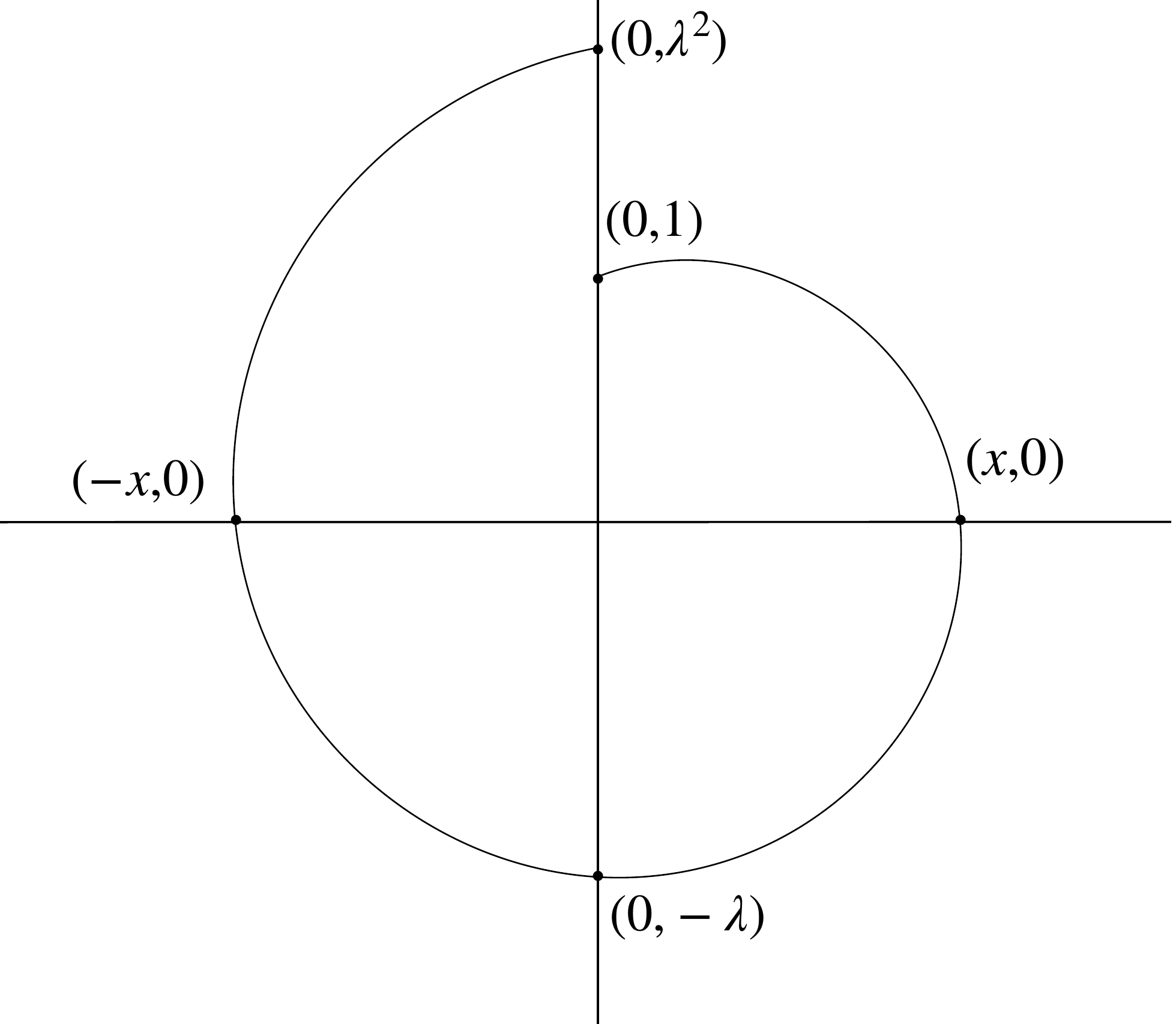}}
\caption{Illustration of Proposition~\ref{prop:contre_ex}. The points starting at $(0,1)$ and $(0,\lambda^2)$ are distinct but their associated trajectories would have to intersect in $(x,0)$, which is impossible.}
\label{fig:contre_ex}
\end{center}
\end{figure}

\section{Exact multiplication}\label{app:memory_savings}

\begin{algorithm}
   \caption{Exactly reversible multiplication by a ratio, from \citet{10.5555/3045118.3045343}}
   \label{alg:reversible-mult}
\begin{algorithmic}[1]
   \STATE {\bfseries Input:} Information buffer $i$, value $c$, ratio $n / d$  
   \STATE $i = i \times d$ 
   \STATE $i = i + (c \! \mod d)$  \label{step:f2}
   \STATE $c = c \div d$                \label{step:f3}
   \STATE $c = c \times n$         \label{step:b1}
   \STATE $c = c +  (i \! \mod n)$         \label{step:b2}
   \STATE $i = i \div n$          \label{step:b3}
   \STATE \textbf{return} updated buffer $i$, updated value $c$
\end{algorithmic}
\end{algorithm}

We here present the algorithm from~\citet{10.5555/3045118.3045343}. In their paper, the authors represent $\gamma$ as a rational number, $\gamma = \frac{n}{d} \in \QQ$. The information is lost during the integer division of $v_{n}$ by $d$ in~\eqref{eq:Momentum ResNet}. The store this information, it is sufficient to store the remainder $r$ of this integer division. $r$ is stored in an “information buffer” $i$. To update $i$, one has to left-shift the bits in $i$ by multiplying it by $n$ before adding $r$. The entire procedure is illustrated in Algorithm~\ref{alg:reversible-mult} from~\citet{10.5555/3045118.3045343}.

\section{Experiment details}\label{app:experiment_details}

In all our image experiments, we use Nvidia Tesla V100 GPUs.
\\
\\
For our experiments on CIFAR-10 and 100, we used a batch-size of $128$ and we employed SGD with a momentum of $0.9$. The training was done over $220$ epochs. The initial learning rate was $0.01$ and was decayed by a factor $10$ at epoch $180$. A constant weight decay was set to $5 \times 10^{-4}$. Standard inputs preprocessing as proposed in Pytorch \citep{paszke2017automatic} was performed.  
\\
\\
For our experiments on ImageNet, we used a batch-size of $256$ and we employed SGD with a momentum of $0.9$. The training was done over $100$ epochs. The initial learning rate was $0.1$ and was decayed by a factor $10$ every $30$ epochs. A constant weight decay was set to $10^{-4}$. Standard inputs preprocessing as proposed in Pytorch \citep{paszke2017automatic} was performed: normalization, random croping of size $224 \times 224$ pixels, random horizontal flip. 
\\
\\
For our experiments in the continuous framework, we adapted the code made available by \citet{chen2018neural} to work on the CIFAR-10 data set and to solve second order ODEs. We used a batch-size of $128$, and used SGD with a momentum of $0.9$. The initial learning rate was set to $0.1$ and reduced by a factor $10$ at iteration $60$. The training was done over $120$ epochs. 
\\
\\
For the learning to optimize experiment, we generate a random Gaussian matrix $D$ of size $16\times 32$. The columns are then normalized to unit variance.
We train the networks by stochastic gradient descent for $10000$ iterations, with a batch-size of $1000$ and a learning rate of $0.001$.
The samples $y_q$ are generated as follows:
we first sample a random Gaussian vector $\tilde{y}_q$, and then we use $y_q = \frac{\tilde{y}_q}{\|D^{\top}\tilde{y}_q\|_{\infty}}$, which ensures that every sample verify $\|D^{\top}y_q\|_{\infty} = 1$. This way, we know that the solution $x^*$ is zero if and only if $\lambda \geq 1$. The regularization is set to $\lambda = 0.1$.
\section{Backpropagation for Momentum ResNets}\label{app:backprop_mom_nets}

In order to backpropagate the gradient of some loss in a Momentum ResNet, we need to formulate an explicit version of~\eqref{eq:Momentum ResNet}.
Indeed,~\eqref{eq:Momentum ResNet} writes explicitly 
\begin{equation}
\begin{split}
  v_{n+1} & = \gamma v_n + (1-\gamma) f(x_n,\theta_n)\\ 
  x_{n+1} & = x_n + (\gamma v_n + (1-\gamma) f(x_n,\theta_n)).
\end{split}
\end{equation}
Writing $z=(x,v)$, the backpropagation for Momentum ResNets then writes, for some loss $L$ 
$$
\nabla_{z_{k-1}}L=  \left[ {\begin{array}{cc}
  I +  (1-\gamma) \partial_xf(x_{k-1},\theta_{k-1}) &  \gamma I\\
   (1-\gamma) \partial_xf(x_{k-1},\theta_{k-1}) & \gamma I \\
  \end{array} } \right]^{T} \nabla_{z_{k}}L
$$
$$
\nabla_{\theta_{k-1}}L = (1 - \gamma) \left[ {\begin{array}{cc}
 \partial_{\theta}f(x_{k-1},\theta_{k-1}) \\
   \partial_{\theta}f(x_{k-1},\theta_{k-1})  \\
  \end{array} } \right]^{T} \nabla_{z_{k}}L.
$$
We implement these formula to obtain a custom Jacobian-vector product in Pytorch.

\section{Additional figures}\label{app:figures}

\subsection{Learning curves on CIFAR-10}

We here show the learning curves when training a ResNet-101 and a Momentum ResNet-101 on CIFAR-10.

\begin{figure}[H]
\vskip 0.2in
\begin{center}
\centerline{\includegraphics[width=\columnwidth]{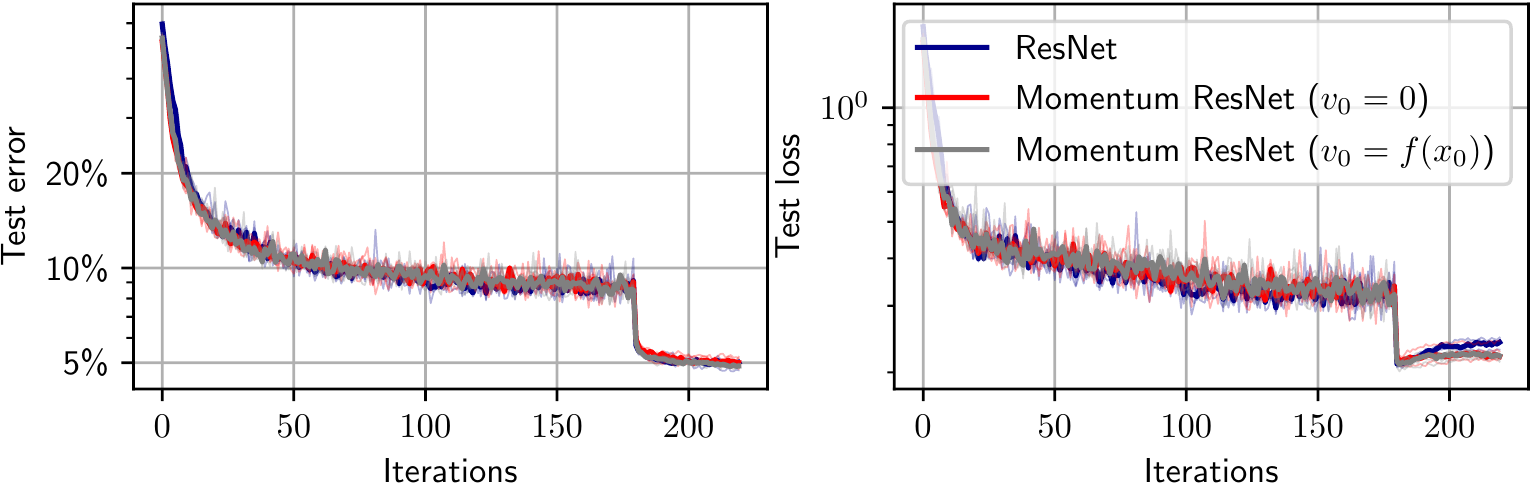}}
\caption{Test error and test loss as a function of depth on CIFAR-10 with a ResNet-101 and two Momentum ResNets-101.}
\end{center}
\vskip -0.2in
\end{figure}

\end{document}